\newcommand{\mathbbm}[1]{\text{\usefont{U}{bbm}{m}{n}#1}}
\newtheorem{theorem}{Theorem}
\newtheorem{corollary}{Corollary}
\newtheorem{lemma}{Lemma}
\newtheorem{definition}{Definition}
\newtheorem{proposition}{Proposition}
\def\Pr{\mathop{\rm Pr}\nolimits}
\def\EE{{\cal E}}
\def\SS{{\cal S}}
\def\MM{{\cal M}}
\def\CC{{\cal C}}
\def\X{{\mathbf X}}
\def\Y{{\mathbf Y}}
\def\U{{\mathbf U}}
\def\W{{\mathbf W}}
\def\V{{\mathbf V}}
\def\P{{\mathbf P}}
\def\Q{{\mathbf Q}}
\def\x{{\mathbf x}}
\def\y{{\mathbf y}}
\def\w{{\mathbf w}}
\def\p{{\mathbf p}}
\newcommand{\indep}{\!\perp\!\!\!\perp}
\tikzset{
state/.style = {shape=circle,draw,thick,minimum size=3.0em,font=\huge},
sstate/.style = {draw,thick,minimum size=3.0em,font=\huge},
dstate/.style = {shape=circle,draw,thick,double,minimum size=3.0em,font=\huge},
square/.style={regular polygon,regular polygon sides=4},
bidirected/.style={Latex-Latex,dashed},
}
\tikzset{>={Latex[width=1.5mm,length=1.5mm]}}
\newcommand\dagsize{0.35}
\title{Constrained Identifiability of Causal Effects}
\author{
    %Authors
    % All authors must be in the same font size and format.
    Yizuo Chen\textsuperscript{\rm 1},
    Adnan Darwiche\textsuperscript{\rm 1}
}
\title{My Publication Title --- Single Author}
\author {
    Author Name
}
\title{My Publication Title --- Multiple Authors}
\author {
    % Authors
    First Author Name\textsuperscript{\rm 1},
    Second Author Name\textsuperscript{\rm 2},
    Third Author Name\textsuperscript{\rm 1}
}
\begin{document}

\maketitle

\begin{abstract}
We study the identification of causal effects in the presence of different types of constraints (e.g., logical constraints) in addition to the causal graph. These constraints impose restrictions on the models (parameterizations) induced by the causal graph, reducing the set of models considered by the identifiability problem. We formalize the notion of \emph{constrained identifiability}, which takes a set of constraints as another input to the classical definition of identifiability. We then introduce a framework for testing constrained identifiability by employing tractable Arithmetic Circuits (ACs), which enables us to accommodate constraints systematically. We show that this AC-based approach is at least as complete as existing algorithms (e.g., do-calculus) for testing classical identifiability, which only assumes the constraint of strict positivity. We use examples to demonstrate the effectiveness of this AC-based approach by showing that unidentifiable causal effects may become identifiable under different types of constraints.
\end{abstract}

\section{Introduction}
A causal effect measures the impact of an intervention on an outcome of interest. For example, one may ask the question ``how likely would an employee resign if the company reduced the bonus?'' More generally, given a set of treatment variables \(\X\) and another set of outcome variables \(\Y,\) the causal effect is defined as the probability of \(\y\) under a treatment \(do(\x)\) and is commonly denoted by \(\Pr(\y|do(\x))\) or \(\Pr_\x(\y).\) These queries belong to the second rung of Pearl's causal hierarchy and cannot be answered in general without conducting experiments~\citep{pearl18}. When a causal graph is available, however, some of these causal effects may be answered from observational studies. This leads to the problem of causal-effect identifiability that studies whether a causal effect can be uniquely determined from observational distributions when a causal graph is given; see, e.g.,~\citep{pearl00b,imbens_rubin_2015,PetersBook,SpirtesBook}.

A recent line of research went beyond classical identifiability to show that more causal effects may become identifiable when additional information besides a causal graph is available. Works that fall into this line include the exploitation of the knowledge of context-specific independence~\citep{uai/BoutilierFGK96,nips/TikkaHK19} and more recently, functional dependencies~\citep{neurips/ChenDarwiche24}. In this work, we show that such knowledge can be understood as \emph{constraints} on the models (parameterizations) induced by the causal graph. This leads us to formulate the notion of \emph{constrained identifiability},\footnote{The notion was initially introduced in~\citep{neurips/ChenDarwiche24} to address positivity assumptions (constraints). We generalize the notion here to incorporate any constraints.} which takes a set of models (defined by constraints) as an additional input to the classical definition of identifiability. Constrained identifiability is general enough to incorporate more types of constraints, such as fully-known observational distributions, as shown in this work. Since constraints reduce the distributions considered by identifiability, an unidentifiable causal effect may become identifiable due to available constraints. 
\begin{figure}[tb]
\centering
\begin{subfigure}[r]{0.40\linewidth}
\centering
\begin{tikzpicture}[->=stealth,auto,scale=0.45,transform shape]
% x node set with absolute coordinates
\node[font=\huge] (A) at (0,0) {$A$};
\node[font=\huge] (B) at (3,0) {$B$};
\node[font=\huge] (theta1A) at (-1,2) {$\theta^1_{A} = [0.2,0.8]$};
\node[font=\huge] (theta2A) at (-1,1) {$\theta^2_{A} = [0.3,0.7]$};
\node[font=\huge] (theta1B1) at (3,4) {$\theta^1_{B|a} = [0.1,0.9]$};
\node[font=\huge] (theta1B2) at (3,3) {$\theta^1_{B|\bar{a}} = [0.4,0.6]$};
\node[font=\huge] (theta2B1) at (3,2) {$\theta^2_{B|a} = [0.5,0.5]$};
\node[font=\huge] (theta2B2) at (3,1) {$\theta^2_{B|\bar{a}} = [1,0]$};
% Directed edge
%\path (A) edge (U);
\path (A) edge (B);
\end{tikzpicture}
\caption{causal graph}
\label{sfig:intro-1}
\end{subfigure}
\begin{subfigure}[r]{0.29\linewidth}
\centering
\includegraphics[width=\linewidth]{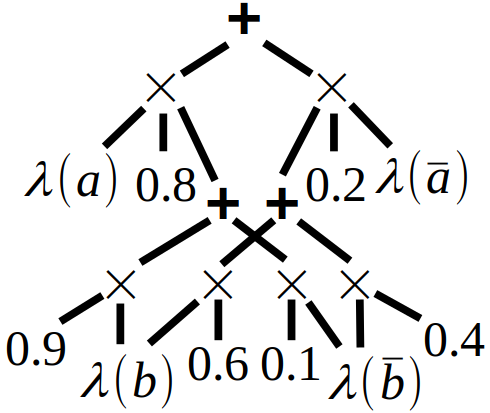}
\caption{$AC_1$}
\label{sfig:intro-2}
\end{subfigure}
\begin{subfigure}[r]{0.29\linewidth}
\centering
\includegraphics[width=\linewidth]{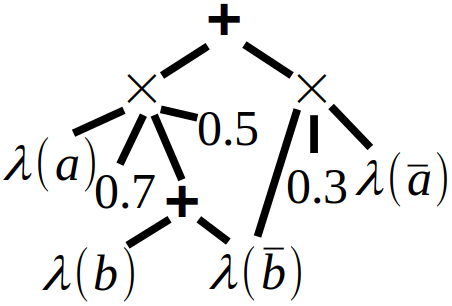}
\caption{$AC_2$}
\label{sfig:intro-3}
\end{subfigure}
\caption{Arithmetic circuits for two different models.}
\label{fig:intro}
\end{figure}

We further propose an approach for testing constrained identifiability based on Arithmetic Circuits (AC)~\citep{darwiche/dnnf,darwiche2003,DarwicheBook09,faia/Darwiche21}. In essence, an AC is a data structure representing a particular computation procedure and has played an influential role in probabilistic inference. In this work, we show yet another application of ACs in testing constrained identifiability based on the following observation: A causal effect \(\Pr_\x(\y)\) is identifiable iff there exists an AC that computes \(\Pr_\x(\y)\) and whose output only depends on the observational distribution \(\Pr(\V).\) That is, if we can construct an AC to compute the causal effect and certify that the output of the AC is invariant under any fixed \(\Pr(\V),\) the causal effect is guaranteed to be identifiable. The construction of the AC can be done using existing knowledge compilation methods, which enables us to exploit parameter-specific constraints of the model such as equal parameters (e.g., context-specific independences) and 0/1 parameters (e.g., logical constraints) and yield simpler ACs; see, e.g.,~\citep{kr/Darwiche02,Chavira.Darwiche.Ijcai.2007,corr/counter-ACE,flairs/0002D24}. Consider two different parameterizations \(\Theta^1, \Theta^2\) for the same causal graph $G$ in Figure~\ref{sfig:intro-1} which contains two binary variables \(A, B.\) The AC for \((G,\Theta^2)\) in Figure~\ref{sfig:intro-3} contains fewer nodes than the AC for \((G,\Theta^1)\) in Figure~\ref{sfig:intro-2} since \(\Theta^2\) exhibits more constraints that get exploited when constructing the AC. As we will see later, constructing simpler ACs may lead to more complete causal-effect identifications using the method we shall propose. To check whether the output of an AC is invariant, we propose a method based on deriving an expression that is equivalent to the AC output and only involves the observed variables \(\V.\) We will show that this AC-based method can treat constraints in the form of context-specific independencies, functional dependencies, and fully-known observational distributions (that is, when $\Pr(\V)$ is known).

This paper is structured as follows. We start with technical preliminaries on the problem of classical causal-effect identifiability in Section~\ref{def:background}. We discuss the role of constraints in the context of identifiability and formally define the notion of constrained identifiability in Section~\ref{sec:constrained-id}. We introduce the AC-based method for testing constrained identifiability in Section~\ref{sec:ac-method} and propose a method for testing whether the output of an AC is invariant in Section~\ref{sec:projection}. We demonstrate the effectiveness of AC-based method with examples in Section~\ref{sec:examples}. We close with some concluding remarks in Section~\ref{sec:conclusion}. All the proofs are included in the Appendix.

\section{Technical Preliminaries}
\label{def:background}
We consider discrete variables in this work.
Single variables are denoted by uppercase letters (e.g., \(X\))
and their states are denoted by lowercase letters (e.g., \(x\)).
Sets of variables are denoted by bold uppercase letters (e.g., \(\X\))
and their instantiations are denoted by bold lowercase letters (e.g., \(\x\)).

The problem of causal-effect identifiability studies whether a causal effect can be uniquely computed from a given causal graph \(G\) and a subset of observed variables \(\V\). That is, to show that a causal effect \(\Pr_\x(\y)\) is identifiable, we need to prove that all parameterizations for \(G\) that induce the same observational distribution \(\Pr(\V)\) also yield the same value for \(\Pr_\x(\y).\) While the general definition of identifiability (not necessarily for causal effects) in~\citep{pearl00b} does not restrict the observational distribution \(\Pr(\V),\)
positivity assumptions (constraints) are often assumed to prevent zeros in \(\Pr(\V)\); see~\citep{neurips/ChenDarwiche24,uai/KivvaMEK22,icml/HwangCKL24} for recent discussions on positivity. The most common form of positivity constraints in the existing literature is strict positivity, i.e., \(\Pr(\V) > 0\), which is a sufficient condition for complete causal-effect identification algorithms including the ID algorithm~\citep{aaai/ShpitserP06} and the do-calculus~\citep{pearl00b}.\footnote{\citep{neurips/ChenDarwiche24} showed that most causal effects are not identifiable unless we assume the positivity constraint \(\Pr(X) > 0\) for each treatment variable \(X.\)} We will refer to identifiability with strict positivity as ``classical identifiability,'' which is defined next. 

\begin{definition}[Causal-Effect Identifiability]
\label{def:identifiability}
Let \(G\) be a casual graph and \(\V\) be its observed variables. A causal effect \(\Pr_\x(\y)\) is said to be \underline{identifiable} with respect to $\langle G$, $\V \rangle$ if \(\Pr^1_{\x}(\y)=\) \(\Pr^2_{\x}(\y)\)
for any pair of models \(M^1, M^2\) that induce \(\Pr^1, \Pr^2\) where \(\Pr^1(\V)=\) \(\Pr^2(\V) > 0.\)
\end{definition}

A causal effect is \emph{unidentifiable} if it is not identifiable. To show that a causal effect is unidentifiable, it suffices to find two parametrizations for \(G\) that induce the same distribution \(\Pr(\V) > 0\) yet different values for the causal effect \(\Pr_\x(\y)\).

\section{Constrained Identifiability}
\label{sec:constrained-id}
We next generalize this setup by considering more information, in addition to the causal graph and observed variables, as inputs to the identifiability problem. Past works that fit in this direction include the exploitation of \emph{context-specific independences} in~\citep{nips/TikkaHK19} and \emph{functional dependencies} in~\citep{neurips/ChenDarwiche24} to improve the identifiability of causal effects. The general setup we formulate in this work will allow us to also consider further information types such as \emph{fully-known observational distributions \(\Pr(\V),\)} which can be readily available in practice when data on \(\V\) allows us to estimate \(\Pr(\V).\)

Our setup is based on the notion of \emph{constrained identifiability,} which was recently introduced in~\citep{neurips/ChenDarwiche24} to systematically treat a variety of positivity constraints (assumptions). We will extend this notion next to incorporate arbitrary types of constraints.

We first define the (constrained) identifiability tuple. Here, we use ``model'' to mean a full parametrization of the causal graph (a model induces a distribution).
\begin{definition}
\label{def:constrained-tuple}
We call \(\langle G, \V, \MM \rangle\) an \underline{identifiability tuple} when \(G\) is a causal graph (DAG), \(\V\) is its set of observed variables, and \(\MM\) is a set of  models induced by \(G.\)
\end{definition}
The notion of constrained-identifiability can now be defined as follows. For simplicity, we will say ``identifiability'' to mean ``constrained-identifiability'' in the rest of paper.
\begin{definition}
\label{def:constrained-id}
Let \(\langle G, \V, \MM \rangle\) be an identifiability tuple. A causal effect \(\Pr_\x(\y)\) is said to be \underline{identifiable} wrt \(\langle G, \V, \MM \rangle\) if
\(\Pr^1_{\x}(\y)=\) \(\Pr^2_{\x}(\y)\)
for any pair of models \(M^1, M^2 \in \MM\) that induce \(\Pr^1, \Pr^2\) where \(\Pr^1(\V)=\) \(\Pr^2(\V).\)
\end{definition}

Positivity constraints, context-specific independencies (CSI), and functional independencies restrict the set of models \(\MM\) in the definition of constrained-identifiability. That is, we only consider models that induce a distribution \(\Pr\) that satisfies all the constraints above when testing for identifiability. To illustrate, consider the causal graph in Figure~\ref{sfig:strong-uid1} where all variables are binary and \(X, Y\) are observed. If we impose the positivity constraint \(\Pr(X, Y) > 0,\) CSI constraint (\(Y \indep U | x\)), and functional variable \(Y,\) then a model will be excluded from \(\MM\) if it assigns \(Y \gets X\) or \(Y \gets X \oplus U\) as a structural equation for \(Y.\)\footnote{The CSI constraint (\(Y \indep U | x\)) says that \(Y\) is independent of \(U\) when \(X=x.\)  Variable \(Y\) is functional means that \(Y\) is functionally determined by its parents (\(U\) and \(X\)). We will formally define these constraints in Section~\ref{sec:examples}.}

If \(\CC\) is a set of constraints, we will use \(\MM(\CC)\) to denote the set of models that satisfy \(\CC.\) In the example above, we write the set of models as \(\MM[\Pr(X, Y) > 0, (Y \indep U | x), (\W = \{Y\})],\) where $\W$ denotes variables with functional dependencies. When no constraint is imposed, we attain the weakest version of identifiability with \(\MM(\emptyset).\) Classical identifiability (Definition~\ref{def:identifiability}) corresponds to \(\langle G, \V, \MM[\Pr(\V) > 0] \rangle.\) 

One type of constrained identifiability arises when we restrict the observational distribution to a fixed \(\Pr^{\star}(\V).\) That is, we only consider models that induce the given \(\Pr^{\star}(\V).\) 
To illustrate, consider the diagram in Figure~\ref{sfig:d-id-diag}, which contains the set of all models \(\MM\) induced by a causal graph. We can partition the set of models into \textit{equivalence classes,} where all models in a particular class induce the same distribution \(\Pr(\V)\) --- hence, an equivalence class is defined by its distribution $\Pr(\V).$\footnote{There is an infinite number of such equivalence classes since
a causal graph $G$ induces infinitely many distributions \(\Pr(\V).\)}
In classical identifiability, to show that a causal effect is identifiable, we need to show that all models in the same equivalence class produce the same causal effect \(\Pr_\x(\y)\). When \(\Pr^{\star}(\V)\) is fixed, however, we only consider a single equivalence class \(\Pr^{\star}(\V)\) and check whether all models in this particular class produce the same value for the causal effect \(\Pr_\x(\y)\). We denote the identifiability tuple under a fixed \(\Pr^{\star}(\V)\) as \(\langle G, \V, \MM[\Pr(\V)=\Pr^{\star}(\V)]\rangle.\)

\begin{figure}[tb]
\vspace{-2.5em}
\centering
\includegraphics[width=0.35\linewidth]{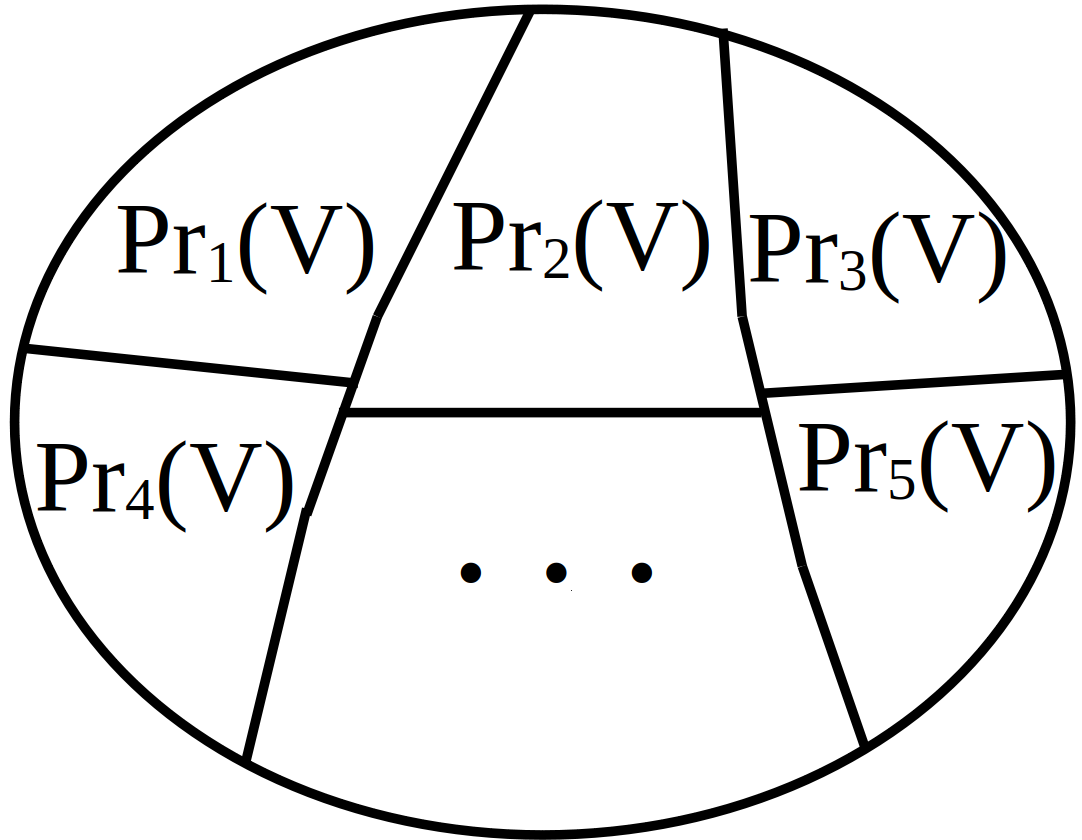}
\caption{models for \(\Pr(\V)\)}
\label{sfig:d-id-diag}
\end{figure}

A causal effect is identifiable by Definition~\ref{def:identifiability} iff it is identifiable under \emph{all} fixed \(\Pr^\star(\V) > 0.\) However, a causal effect that is not identifiable by Definition~\ref{def:identifiability} may still be identifiable under \emph{some} \(\Pr^\star(\V) > 0.\) This happens when we have two distinct equivalence classes $C_1$ and $C_2$ where the models in $C_1$ agree on the causal effect but those in $C_2$ disagree. We next show a causal effect that belongs to this category.\footnote{The proof of Proposition~\ref{prop:weak-weak-1} exploits the context-specific independence relations \((Y \indep C\ |\ A=a, B)\) and \((Y \indep B\ |\ A=\bar{a}, C)\); see~\citep{nips/TikkaHK19} for more details about leveraging this type of constraints for identifiability.}

\begin{proposition}
\label{prop:weak-weak-1}
Consider the causal graph \(G\) in Figure~\ref{sfig:weak-weak1} where \(\V = \{X,\) \(A,\) \(B,\) \(C,\) \(Y\}.\) The causal effect \(\Pr_x(y)\) is unidentifiable wrt \(\langle G, \V, \MM[\Pr(\V)>0]\rangle\) but is identifiable wrt \(\langle G, \V, \MM[\Pr(\V) = \Pr^\star(\V)]\rangle\) for some \(\Pr^\star(\V) > 0.\)
\end{proposition}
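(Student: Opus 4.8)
The plan is to exploit the equivalence-class picture introduced just above the statement: I will exhibit one equivalence class whose models disagree on \(\Pr_x(y)\) (yielding unidentifiability under generic positivity) and a different equivalence class, defined by a carefully chosen \(\Pr^\star(\V)\), whose models all agree on \(\Pr_x(y)\) (yielding identifiability under that fixed distribution). The engine for the second part is the pair of context-specific independencies \((Y \indep C \mid A=a, B)\) and \((Y \indep B \mid A=\bar a, C)\) flagged in the footnote, which hold for the special \(\Pr^\star\) but not for a generic distribution over \(\V\).

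For the unidentifiability direction, I would construct two explicit models \(M^1, M^2\) for \(G\) that induce the same strictly-positive \(\Pr(\V)\) yet differ on \(\Pr_x(y)\). The standard device is to locate a confounded (bidirected) path between \(X\) and \(Y\) and use the freedom in the latent variable's distribution and in the structural equations to hold the observed marginal \(\Pr(\V)\) fixed while shifting weight along the confounded path, which is exactly the part of \(\Pr_x(y)\) that observation does not pin down. I would then verify by direct substitution that \(\Pr^1(\V)=\Pr^2(\V)>0\) while \(\Pr^1_x(y)\neq\Pr^2_x(y)\). The one subtlety is that the common observational distribution of these two witnesses must be chosen so that it violates at least one of the two context-specific independencies above; otherwise it would land in the identifiable class and the construction would fail.

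For the identifiability direction, I would fix a specific \(\Pr^\star(\V)>0\) under which both context-specific independencies hold, and then show that every model inducing \(\Pr^\star(\V)\) assigns the same value to \(\Pr_x(y)\). The idea is that the two CSIs prune the dependence of \(Y\) on \(C\) in the context \(A=a\) and on \(B\) in the context \(A=\bar a\); in each context the effective structure relating \(X\) to \(Y\) collapses to one that is identifiable by ordinary adjustment/do-calculus, so a formula for \(\Pr_x(y)\) can be assembled by case-splitting on the value of \(A\) and summing the two context-specific expressions. Because every term in the resulting formula is a functional of \(\Pr^\star(\V)\) alone, all models in the class agree, establishing identifiability.

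The main obstacle is this second direction: I must argue that the context-specific independencies, which are stated as properties of the observational distribution \(\Pr^\star(\V)\) and hence hold across the whole equivalence class, legitimately license the context-wise edge pruning in the interventional regime \(\Pr_x(\cdot)\). This is not automatic — it requires the context-specific refinement of the do-calculus rules (as in the cited work of Tikka et al.) to transfer each CSI across the intervention, together with a check that the pruned context-specific subgraphs are genuinely free of \(X\)--\(Y\) confounding. Getting these context-by-context derivations to close, and confirming that the assembled expression is invariant over the entire class of \(\Pr^\star\), is the delicate part of the argument.
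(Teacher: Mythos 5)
Your overall strategy coincides with the paper's: the special \(\Pr^\star(\V)\) is one satisfying the two context-specific independencies \((Y \indep C \mid a, B)\) and \((Y \indep B \mid \bar a, C)\), identifiability is obtained by case-splitting on \(A\), and unidentifiability under generic positivity comes from the confounding structure of \(G\). Two remarks on execution. First, for unidentifiability the paper does not build two explicit models; it invokes completeness of the ID algorithm (the mutilated graph contains a hedge), which already guarantees a pair of models agreeing on some positive \(\Pr(\V)\) and disagreeing on \(\Pr_x(y)\). Your proposed device of ``a confounded path between \(X\) and \(Y\)'' does not literally exist in this graph: \(X\) is a root with no latent parents, and the obstruction instead lives in the c-component formed by \(Y\)'s parents \(A, B, C\) through \(U_2\) and \(U_3\), with \(C\) a child of \(X\). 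An explicit two-model construction is possible but would have to perturb that c-component, not an \(X\)--\(Y\) confounder; your caveat about the witness distribution violating a CSI is automatic rather than an extra condition to verify.

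Second, the obstacle you flag as the delicate part of the identifiability direction --- licensing the context-wise pruning in the interventional regime --- has a simple resolution in this graph that you should supply, and it is exactly what makes the paper's proof a routine computation. The parents of \(Y\) are \(A, B, C\), all observed, so under \(\Pr^\star(\V) > 0\) the CPT \(\theta_{y \mid a,b,c} = \Pr^\star(y \mid a,b,c)\) is pinned down identically for \emph{every} model in the equivalence class \(\MM[\Pr(\V) = \Pr^\star(\V)]\); hence the two CSIs hold at the parameter level for all such models, and since \(X\) is not a parent of \(Y\), that CPT is untouched by \(do(x)\). No context-specific do-calculus is needed: the paper writes \(\Pr_x(a,y)\) and \(\Pr_x(\bar a,y)\) via the truncated factorization, applies the CSIs to drop \(C\) (resp.\ \(B\)) from \(\Pr(y \mid a, b, c)\) (resp.\ \(\Pr(y \mid \bar a, b, c)\)), and then marginalizes the hidden variables using only the mutual independence of the roots \(U_1, U_2, U_3, X\), arriving at
\(\Pr_x(y) = \Pr(a,y) + \sum_c \Pr(y \mid \bar a, c)\Pr(c \mid x)\Pr(\bar a),\)
a functional of \(\Pr^\star(\V)\) alone. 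Until you either carry out this computation or make the observation that reduces it to one, your second direction remains a plan rather than a proof.
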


The following causal effect is not identifiable for all \(\Pr^\star(\V) > 0\) (it is not identifiable according to Definition~\ref{def:identifiability}).

\begin{proposition}
\label{prop:strong-uid1}
Consider the causal graph \(G\) in Figure~\ref{sfig:strong-uid1} where \(\V = \{X, Y\}\). The causal effect \(\Pr_x(y)\) is unidentifiable wrt \(\langle G, \V, \MM[\Pr(\V) = \Pr^\star(\V)] \rangle\) for all \(\Pr^\star(\V) > 0.\) 
\end{proposition}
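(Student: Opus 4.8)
I read the graph in Figure~\ref{sfig:strong-uid1} as the classical ``bow'': a latent common cause $U$ of $X$ and $Y$ together with the direct edge $X \to Y$, with only $X,Y$ observed. The plan is to show that for \emph{every} strictly positive target $\Pr^\star(\V)=\Pr^\star(X,Y)$ I can construct two models that both induce $\Pr^\star(\V)$ yet disagree on $\Pr_x(y)$; this is exactly unidentifiability wrt $\langle G,\V,\MM[\Pr(\V)=\Pr^\star(\V)]\rangle$, and since $\Pr^\star$ is arbitrary it settles the claim for all positive $\Pr^\star$. I would take $U$ binary and recall the two relevant functionals of a model: the observational distribution $\Pr(X,Y)=\sum_u \Pr(U=u)\,\Pr(X\mid U=u)\,\Pr(Y\mid X,U=u)$ and the causal effect $\Pr_x(y)=\sum_u \Pr(U=u)\,\Pr(Y=y\mid X=x,U=u)$, which differ precisely because the intervention deletes the $\Pr(X\mid U)$ factor.

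Writing $q_{xy}=\Pr^\star(X=x,Y=y)$, the construction I have in mind fixes $\Pr(U=0)=\Pr(U=1)=\tfrac12$ and sets the conditional law of $(X,Y)$ given $U$ to be $q+\epsilon\delta$ when $U=0$ and $q-\epsilon\delta$ when $U=1$, where $\delta$ is a fixed $2\times2$ array whose entries sum to zero. Because the two branches average back to $q$, every member of this one-parameter family induces the marginal $\Pr(X,Y)=\Pr^\star$, and for $|\epsilon|$ small all entries stay positive, so each is a legitimate model of the class; the unconfounded member $\epsilon=0$ has $X\indep U$ and hence $\Pr_x(y)=\Pr^\star(y\mid x)$. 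It remains to show that some other member assigns a different value.

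Substituting into the causal-effect formula, $\Pr_x(Y=1)$ equals $\tfrac12 f(\epsilon)+\tfrac12 f(-\epsilon)$, where
\[ f(t)=\frac{q_{x1}+t\,\delta_{x1}}{(q_{x0}+q_{x1})+t\,(\delta_{x0}+\delta_{x1})} \]
is a linear-fractional (M\"obius) function of $t$. I would pick $\delta$ to perturb only the row $X=x$ together with one offsetting entry in the other row, e.g.\ $\delta_{x1}=1$, $\delta_{x0}=0$, $\delta_{\bar x 1}=-1$, and the remaining entry $0$; then $\delta_{x0}+\delta_{x1}=1\neq0$ and $\delta_{x1}(q_{x0}+q_{x1})-q_{x1}(\delta_{x0}+\delta_{x1})=q_{x0}>0$, which makes $f$ strictly concave on a neighbourhood of $0$. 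Strict Jensen then gives $\tfrac12 f(\epsilon)+\tfrac12 f(-\epsilon)<f(0)=\Pr^\star(Y=1\mid X=x)$ for all small $\epsilon\neq0$. Thus the models at $\epsilon=0$ and at a small $\epsilon\neq0$ lie in the same equivalence class $\Pr^\star(\V)$ but give different values for $\Pr_x(y)$, which is the desired conclusion.

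The main obstacle is making the perturbation move the causal effect \emph{for every} positive target at once: the linear term in $\epsilon$ cancels by the $U$-symmetry, so the effect is governed by the second-order (curvature) term, and I must ensure the non-degeneracy $\delta_{x1}(q_{x0}+q_{x1})\neq q_{x1}(\delta_{x0}+\delta_{x1})$ holds uniformly in $q$. Choosing $\delta$ \emph{after} seeing $q$ resolves this: since $\Pr^\star$ is strictly positive we have $q_{x0},\,q_{x0}+q_{x1}>0$, so the single-row perturbation above always yields nonzero curvature. A secondary, routine point is keeping all parameters positive, which holds for $|\epsilon|$ small because $q$ is bounded away from $0$.
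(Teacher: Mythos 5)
Your construction is correct, and it reaches the conclusion by a genuinely different route than the paper. The paper's proof first builds a reference model in which $X$ truly depends on $U$ but $Y$ is conditionally independent of $U$ given $X$ (this is the content of its preliminary lemma), and then perturbs only the CPT of $Y$, giving the perturbations at $u_1$ and $u_2$ opposite signs with weights $\Pr(u_1,x)/\Pr(u_2,x)$ chosen so that the observational joint is exactly preserved; the causal effect then shifts at \emph{first order} in $\epsilon$, and the non-vanishing of that first-order term is precisely the condition $\theta_{x\mid u_1}\neq\theta_{x\mid u_2}$ secured by the lemma. You instead perturb the entire conditional law of $(X,Y)$ given $U$ antisymmetrically in $U$ under a uniform prior, so that matching $\Pr^\star(\V)$ is automatic by symmetry and no preliminary lemma is needed; the price is that the linear term in the causal effect cancels and the separation appears only at \emph{second order}, via strict concavity of the M\"obius function $f$ and strict Jensen, which you correctly verify is non-degenerate ($bc-ad=q_{x0}>0$) uniformly over strictly positive targets. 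Two small points to tidy up: (i) you should state explicitly that the conditional laws $q\pm\epsilon\delta$ are re-factored into the model's actual parameters $\theta_{X\mid U}$ and $\theta_{Y\mid X,U}$ by the chain rule (positivity makes this well-defined, and it is implicit in your formula for $f$); and (ii) your argument is written for binary $X,Y$, whereas the paper's proof allows arbitrary finite state spaces --- the extension is routine (perturb one entry in the row $X=x$ and offset it in another row), but it should be said.
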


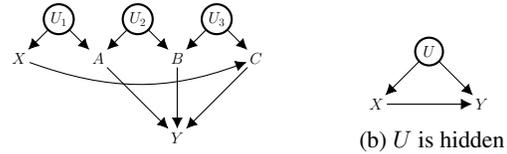
\begin{figure}[tb]
\centering
\begin{subfigure}[r]{0.45\linewidth}
\centering
\begin{tikzpicture}[->=stealth,auto,scale=\dagsize,transform shape]
% x node set with absolute coordinates
\node[font=\huge] (X) at (0,0) {$X$};
\node[state,font=\huge] (U1) at (1.5,1.5) {$U_1$};
\node[font=\huge] (A) at (3,0) {$A$};
\node[state,font=\huge] (U2) at (4.5,1.5) {$U_2$};
\node[font=\huge] (B) at (6,0) {$B$};
\node[state,font=\huge] (U3) at (7.5,1.5) {$U_3$};
\node[font=\huge] (C) at (9,0) {$C$};
\node[font=\huge] (Y) at (6,-3) {$Y$};

% Directed edge
%\path (A) edge (U);
\path (U1) edge (X);
\path (U1) edge (A);
\path (U2) edge (A);
\path (U2) edge (B);
\path (U3) edge (B);
\path (U3) edge (C);
\path (X) edge[bend right=20] (C);
\path (A) edge (Y);
\path (B) edge (Y);
\path (C) edge (Y);
\end{tikzpicture}
\caption{\(U_1,U_2,U_3\) are hidden}
\label{sfig:weak-weak1}
\end{subfigure}
\begin{subfigure}[r]{0.45\linewidth}
\centering
\begin{tikzpicture}[->=stealth,auto,scale=\dagsize,transform shape]
% x node set with absolute coordinates
\node[font=\huge] (X) at (0,0) {$X$};
\node[font=\huge] (Y) at (4,0) {$Y$};
\node[state,font=\huge] (U) at (2,2) {$U$};

% Directed edge
%\path (A) edge (U);
\path (U) edge (X);
\path (U) edge (Y);
\path (X) edge (Y);
\end{tikzpicture}
\caption{\(U\) is hidden}
\label{sfig:strong-uid1}
\end{subfigure}
\caption{causal graphs that exhibit different behaviors of identifiability for \(\Pr_x(y).\)}
\label{}
\end{figure}

\section{Testing Constrained-Identifiability Using Arithmetic Circuits}
\label{sec:ac-method}
We next introduce a general method for testing constrained-identifiability based on Arithmetic Circuits (ACs) as defined in~\citep{darwiche/dnnf,darwiche2003,DarwicheBook09}. One main advantage of using ACs for testing identifiability is that more local structures such as the 0/1 parameters and equal parameters (including context-specific independences) can be exploited by the existing knowledge compilation methods~\citep{kr/Darwiche02,Chavira.Darwiche.Ijcai.2005,Chavira.Darwiche.Ijcai.2007}, leading to more complete causal-effect identifications under the additional constraints. 
Throughout the paper we will assume that the states of all variables (including \(\U\)) are known, and we will point out in case this assumption can be omitted.\footnote{\citep{icml/Zhang0B22} showed that all counterfactual distributions (which subsumes observational and interventional distributions) can be captured by causal models whose hidden variables have a \emph{bounded} number of states.}

Our method for testing identifiability using ACs involves two steps (ingredients): \emph{AC Construction} and \emph{Invariance Testing}. The first step constructs an AC whose output computes the causal effect \(\Pr_\x(\y)\) and which incorporates the constraints. The second step tests identifiability by checking whether the output of the AC is invariant under all models that satisfy the constraints (\(\MM\) in Definition~\ref{def:constrained-id}). Hence, the main objective of this and future work is to improve the quality of these two steps so that we obtain a more complete method for identifying causal effects under constraints.

\subsection{AC Construction}
We start by constructing an AC for the causal effect \(\Pr_\x(\y)\) based on the approach in~\citep{causalityAC}. First, compile the original BN into an AC (in a standard way) that contains indicators \(\lambda_{v}\) for each variable state \(v\) and parameters \(\theta_{v|\p}\) for each instantiation over \(V\) and its parents \(\P.\)\footnote{Note that each \(\theta_{v|\p}\) is equivalent to the conditional probability \(\Pr(v|\p)\) when \(\Pr(\p) > 0.\) Hence, we will use \(\theta_{v|\p}\) and \(\Pr(v|\p)\) interchangeably when the positivity condition holds.} Second, replace every parameter \(\theta_{x | \p}\) with 1 for each treatment \(X \in \X.\) Finally, for each \(V \in \X \cup \Y\), assign \(\lambda_v = 1\) if \(v\) is compatible with \(\x,\y\) and \(\lambda_v = 0\) otherwise; assign 1 to all other indicators.\footnote{The assignment is identical to the one used for AC estimation under evidence \(\x, \y.\)}

To evaluate an AC under a model \(M,\) we simply plugin each (non-constant) parameter \(\theta_{v | \p}\) with the value from \(M\) and compute the output of the AC in a standard way.
One key observation is that the ACs constructed from the above procedure are guaranteed to compute \(\Pr_\x(\y)\) correctly under \emph{all} parameterizations of \(G.\) As we will see later, when additional constraints are available, the ACs may be further simplified, leading to more complete causal-effect identifiability results in practice. In general, the leaves of an AC are either constants (e.g., 0.8) or parameters (\(\theta_{v | \p}\)). 

Consider the causal graph \(G\) in Figure~\ref{sfig:id-ac7-1} with observed variables \(\V = \{X, A, B, Y\}.\) Assuming all variables are binary, Figure~\ref{sfig:id-ac7-2} depicts an AC for \(\Pr_x(y).\) Due to the space limit, we only plotted one branch for most \(+\)-nodes in the figure. In general, we prefer simpler (smaller) ACs since they can sometimes lead to more complete causal-effect identifiability. While most AC simplifications can be handled automatically by the \textsc{ace} package,\footnote{http://reasoning.cs.ucla.edu/ace.} we propose additional rules that can be used to further simplify ACs in certain scenarios; see Appendix~\ref{app:simplify-AC} for more details.

\begin{figure}[tb]
\centering
\begin{subfigure}[r]{0.28\linewidth}
\centering
\begin{tikzpicture}[->=stealth,auto,scale=\dagsize,transform shape]
% x node set with absolute coordinates
\node[font=\huge] (X) at (0,0) {$X$};
\node[state,font=\huge] (U1) at (1.5,1.5) {$U_1$};
\node[font=\huge] (A) at (3,0) {$A$};
\node[state,font=\huge] (U2) at (4.5,1.5) {$U_2$};
\node[font=\huge] (B) at (6,0) {$B$};
\node[font=\huge] (Y) at (3,-3) {$Y$};

% Directed edge
%\path (A) edge (U);
\path (U1) edge (X);
\path (U1) edge (A);
\path (U2) edge (A);
\path (U2) edge (B);
\path (X) edge (Y);
\path (A) edge (Y);
\path (B) edge (Y);
\end{tikzpicture}
\caption{causal graph}
\label{sfig:id-ac7-1}
\end{subfigure}
\begin{subfigure}[r]{0.35\linewidth}
\centering
\includegraphics[width=\linewidth]{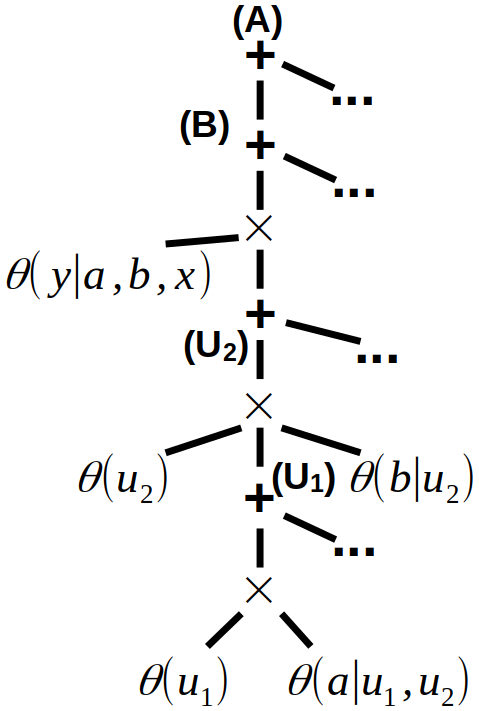}
\caption{AC}
\label{sfig:id-ac7-2}
\end{subfigure}
\begin{subfigure}[r]{0.35\linewidth}
\centering
\includegraphics[width=\linewidth]{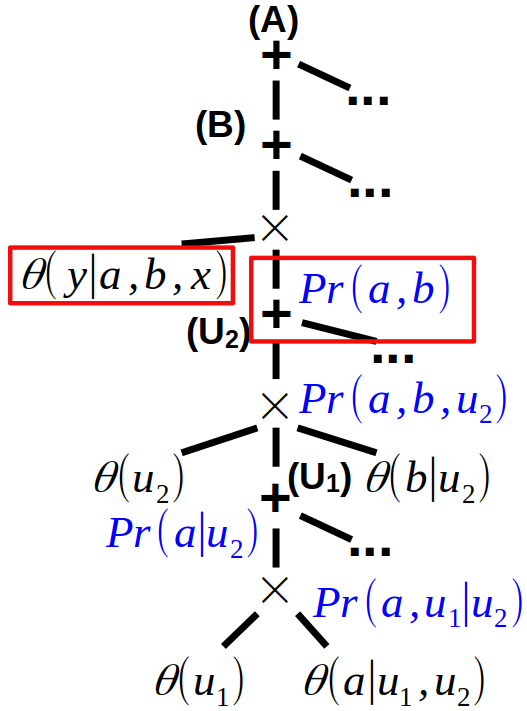}
\caption{Invariant-cut}
\label{sfig:id-ac7-3}
\end{subfigure}
\caption{AC constructed for \(\Pr_x(y)\) for a causal graph; only one branch for each \(+\)-node is plotted; equivalent expressions are shown in blue and nodes in the invariant-cut are marked with red boxes.}
\label{fig:ac-simplified}
\end{figure}

\subsection{Invariance Testing}
The AC constructed from the previous step is guaranteed to compute the causal effect under all models in \(\MM.\) We next develop a method for testing identifiability based on the following observation: a causal effect \(\Pr_\x(\y)\) is identifiable wrt \(\langle G, \V, \MM \rangle\) if the output of the ACs for \(\Pr_\x(\y)\) is \emph{invariant} under all models (in \(\MM\)) that induce a same \(\Pr(\V).\)

In this work, an \emph{expression} is defined as a composition of constants (e.g., 0.8), parameters (\(\theta_{v|\p}\)), and conditional probabilities (e.g., \(\Pr(a | b)\)). Note that each parameter \(\theta_{v|\p}\) is equal to the conditional probability \(\Pr(v | \p)\) when \(\Pr(\p) > 0.\) For example, \(\Pr(y | x),\) \(0.65,\) \(0.2\Pr(a|x) + 0.6\Pr(x),\) \(\sum_u \Pr(a|x)\Pr(u),\) \(\theta(a|x) \Pr(x) + 0.3\Pr(\bar{a})\) are all valid expressions. An AC specifies an expression consisting of \(\times\)'s and \(+\)'s, which can be obtained by evaluating the output of the AC in a standard (bottom-up) way. For example, the AC in Figure~\ref{sfig:id-ac7-2} specifies the expression \(\sum_a \sum_b \theta_{y | a,b,x} \sum_{u_2} \theta_{u_2}\theta_{b|u_2}\sum_{u_1}\theta_{u_1}\theta_{a|u_1,u_2}.\) 
Moreover, every node in an AC corresponds to an expression so we will use ``AC node'' and ``expression'' interchangeably.

Our next goal is to propose a method for testing identifiability that operates on ACs. We start by defining some key notions that are required to achieve this goal. 

\begin{definition}
\label{def:inv-exp}
An expression is called \underline{\(\langle G, \V, \MM \rangle\)-invariant} if it evaluates to the same value for all models in \(\MM\) that induce a same \(\Pr(\V).\)
\end{definition}

The following result shows that we can test identifiability by checking whether the output of an AC is invariant.

\begin{proposition}
\label{prop:did}
A causal effect \(\Pr_\x(\y)\) is identifiable wrt \(\langle G, \V, \MM \rangle\) iff there is an AC for \(\Pr_\x(\y)\) whose output expression is \(\langle G, \V, \MM \rangle\)-invariant.
\end{proposition}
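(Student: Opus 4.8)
The plan is to reduce both directions of the equivalence to a single fact already established by the AC construction: the AC produced by the procedure in this section evaluates to the causal-effect value \(\Pr_\x(\y)\) under \emph{every} parameterization of \(G\), and hence under every model in \(\MM\). Concretely, for a model \(M \in \MM\) inducing \(\Pr\), evaluating the AC by plugging in the parameters \(\theta_{v|\p}\) supplied by \(M\) yields exactly the value \(\Pr_\x(\y)\) that \(M\) induces. Once this is in hand, the proposition follows by unfolding Definition~\ref{def:constrained-id} and Definition~\ref{def:inv-exp} side by side, since both speak about agreement across the models in \(\MM\) that induce a common \(\Pr(\V)\).

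For the (\(\Rightarrow\)) direction I would exhibit a witnessing AC, namely the one obtained from the construction. Fix any two models \(M^1, M^2 \in \MM\) with \(\Pr^1(\V) = \Pr^2(\V)\). By the correctness fact above, the AC output under \(M^i\) equals \(\Pr^i_\x(\y)\) for \(i = 1, 2\). Identifiability wrt \(\langle G, \V, \MM \rangle\) asserts \(\Pr^1_\x(\y) = \Pr^2_\x(\y)\), so the two AC outputs coincide. As \(M^1, M^2\) were an arbitrary same-\(\Pr(\V)\) pair in \(\MM\), the output expression is \(\langle G, \V, \MM \rangle\)-invariant by Definition~\ref{def:inv-exp}.

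For the (\(\Leftarrow\)) direction, suppose some AC for \(\Pr_\x(\y)\) has a \(\langle G, \V, \MM \rangle\)-invariant output. Being an AC \emph{for} \(\Pr_\x(\y)\), it computes the causal effect under all models in \(\MM\), so its output under any \(M \in \MM\) equals the value of \(\Pr_\x(\y)\) induced by \(M\). Take any \(M^1, M^2 \in \MM\) inducing the same \(\Pr(\V)\). Invariance gives equal AC outputs on \(M^1\) and \(M^2\); the previous sentence identifies these outputs with \(\Pr^1_\x(\y)\) and \(\Pr^2_\x(\y)\), so \(\Pr^1_\x(\y) = \Pr^2_\x(\y)\), which is exactly identifiability by Definition~\ref{def:constrained-id}.

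The main thing to be careful about — rather than a genuine obstacle — is pinning down what ``an AC for \(\Pr_\x(\y)\)'' is allowed to be, and invoking its correctness uniformly. I would lean on the earlier claim that the constructed AC computes \(\Pr_\x(\y)\) under all parameterizations of \(G\); both directions then collapse to the observation that the AC output \emph{equals} the causal-effect value on every model, so ``the causal effect agrees across an equivalence class'' and ``the AC output is invariant across that class'' become literally the same statement. If ``AC for \(\Pr_\x(\y)\)'' is meant more liberally — any circuit that happens to compute the effect under all models in \(\MM\) — the same argument goes through verbatim, since only correctness on \(\MM\) is used.
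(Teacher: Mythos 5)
Your proof is correct and takes essentially the same route as the paper, which simply states that the result follows from Definition~\ref{def:constrained-id} and Definition~\ref{def:inv-exp}; you have just spelled out the unfolding of those definitions, together with the (needed) fact from the AC Construction step that the constructed circuit computes \(\Pr_\x(\y)\) under all models, which supplies the witness for the forward direction.
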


In general, it is difficult to tell whether the output of an AC is invariant since the evaluation of the AC often contains hidden variables \(\U.\) However, if we are able to replace the expressions for some AC nodes with \emph{equivalent expressions} that do not depend on \(\U,\) we may obtain output expressions that are obviously invariant, e.g., expressions that only involve observed variables \(\V.\)

\begin{definition}
\label{def:eq-exp}
Two expressions \(\EE_1\) and \(\EE_2\) are called \underline{\(\langle G, \V, \MM \rangle\)-equivalent} if they evaluate to the same value under all models \(\MM.\) If in addition \(\EE_1\) does not contain any parameters (\(\theta\)'s)\footnote{The observational distribution \(\Pr(\V)\) does not depend on the value of parameter \(\theta_{x|\p}\) 
when \(\Pr(\p) = 0\) ---
but the interventional distribution $\Pr_\x(.)$ and, hence, causal effects 
may depend on the value of \(\theta_{x|\p}\) in this case. 
This is why we require projections to be free of parameters as they may not be computable from $\Pr(\V)$. However, since \(\theta_{x|\p}=\Pr(x|\p)\) when \(\Pr(\p) > 0\) we can always replace \(\theta_{x|\p}\) with \(\Pr(x|\p)\) when \(\Pr(\p) > 0.\)} and does not mention any hidden variables, then \(\EE_1\) is called an \underline{\(\langle G, \V, \MM \rangle\)-projection} of \(\EE_2.\)
\end{definition}

The following theorem presents a method for testing identifiability.

\begin{proposition}
\label{prop:id-cut2}
The causal effect \(\Pr_\x(\y)\) is identifiable wrt \(\langle G, \V, \MM \rangle\) iff there is an AC for \(\Pr_\x(\y)\) that satisfies the following condition: there is a cut between the root and leaf nodes where every node on the cut has a \(\langle G, \V, \MM \rangle\)-projection.
\end{proposition}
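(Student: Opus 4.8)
The plan is to prove the biconditional by reducing both directions to Proposition~\ref{prop:did}, which already equates identifiability of \(\Pr_\x(\y)\) with the existence of an AC whose output expression is \(\langle G, \V, \MM \rangle\)-invariant. The bridge I would build is the chain of equivalences: there is an AC with a projection-cut \(\iff\) there is an AC whose root has a \(\langle G, \V, \MM \rangle\)-projection \(\iff\) there is an AC whose output is invariant. The first equivalence is structural and constructive; the second is where the real content lies.

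First, the \((\Leftarrow)\) direction, projection-cut \(\Rightarrow\) identifiable. Suppose an AC for \(\Pr_\x(\y)\) has a cut \(S\) on which every node admits a \(\langle G, \V, \MM \rangle\)-projection. I would first record the structural fact that, for an antichain cut separating the root from the leaves, no leaf lies strictly above \(S\); hence every node strictly above \(S\) is a \(+\)- or \(\times\)-node whose children are either on \(S\) or again strictly above \(S\). Consequently, evaluating the root while treating the \(S\)-nodes as formal inputs expresses the output as a polynomial built only from \(+\) and \(\times\) in the values of the \(S\)-nodes. Now substitute each \(S\)-node's expression by its projection. Since a projection is \(\langle G, \V, \MM \rangle\)-equivalent to the node it projects, and since \(+\) and \(\times\) preserve equivalence, the resulting expression is \(\langle G, \V, \MM \rangle\)-equivalent to the output; moreover it mentions neither parameters nor hidden variables, so it is itself a projection of the root. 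A projection depends only on \(\Pr(\V)\), so the output is invariant, and Proposition~\ref{prop:did} then yields identifiability.

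Second, the \((\Rightarrow)\) direction, identifiable \(\Rightarrow\) projection-cut. By Proposition~\ref{prop:did}, identifiability gives an AC whose output expression \(\EE_r\) is invariant. Taking the singleton cut \(S = \{\text{root}\}\), it suffices to show that \(\EE_r\) has a projection. This is exactly the converse of the easy observation used above: having a projection trivially implies invariance, and what remains is to show that invariance of the causal-effect output implies the existence of an equivalent expression free of parameters and hidden variables. I would approach this through the algebraic structure of the AC: \(\EE_r\) is a polynomial in the parameters \(\theta\), each entry of \(\Pr(\V)\) is likewise a polynomial in \(\theta\), and invariance says precisely that \(\EE_r\) is constant on every fiber of the map sending a parameterization to its induced \(\Pr(\V)\). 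The goal would be to conclude that \(\EE_r\) therefore lies in the field generated by the \(\Pr(\V)\)-entries, i.e.\ that it is a rational function of observed (conditional) probabilities and thus expressible as a projection; one may alternatively invoke that existing complete identification procedures already return the effect in such an observational form whenever it is identifiable under the constraints at hand.

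The step I expect to be the main obstacle is this last implication, invariant \(\Rightarrow\) has a projection. Every other implication above is purely syntactic and constructive, but recovering an explicit observational expression from the bare fact of fiberwise-constancy is delicate: being constant on fibers does not in general force a clean factoring through the map, so the argument must use the specific polynomial structure of ACs over the constrained class \(\MM\) (and, where needed, the division implicitly available through conditional probabilities) to guarantee that the witnessing expression actually exists within the restricted expression language. I would isolate this as a separate lemma and prove it once, after which both directions of the proposition follow from Proposition~\ref{prop:did} together with the substitution argument.
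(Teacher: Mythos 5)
Your \((\Leftarrow)\) direction is correct and is essentially the paper's proof: substituting projections for the cut nodes yields an output expression that mentions only \(\V\), hence is \(\langle G,\V,\MM\rangle\)-invariant, and Proposition~\ref{prop:did} gives identifiability. The genuine gap is in the \((\Rightarrow)\) direction, and it is precisely the step you yourself flag as the ``main obstacle'': the lemma ``invariant output \(\Rightarrow\) the root has a projection'' is left unproven, and neither route you sketch would close it. The algebraic route --- that a polynomial in the parameters which is constant on every fiber of the map \(\Theta \mapsto \Pr(\V)\) must lie in the field generated by the entries of \(\Pr(\V)\) --- is not a theorem; fiberwise constancy does not in general force factoring through the fiber map as a rational function, and for an arbitrary constraint class \(\MM\) (which need not even be semialgebraic) there is no structure to exploit. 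Your fallback of invoking complete identification procedures only applies to \(\MM[\Pr(\V)>0]\), whereas the proposition is stated for arbitrary \(\MM\), which is the whole point of the paper.

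The paper discharges this direction by a definitional observation rather than an algebraic one. Definition~\ref{def:eq-exp} only requires a projection to be an expression that is \(\MM\)-equivalent to the node, contains no parameters, and mentions no hidden variables --- it need not be a polynomial or rational function of observed probabilities. If \(\Pr_\x(\y)\) is identifiable, then by Definition~\ref{def:constrained-id} the value of the causal effect is a well-defined function of \(\Pr(\V)\) over \(\MM\), and the paper simply writes this function down as the root's projection, namely \(\sum_{M\in\MM}\mathbbm{1}\{\Pr(\V)=\Pr^M(\V)\}\,\Pr^M_\x(\y)\), and takes the singleton cut consisting of the root. So your reduction of the hard direction to ``the root has a projection'' is the right move, but the separate lemma you propose to isolate should be proved by this tautological construction, not by the fiber-factoring argument, which would fail in general.
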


We call the cut in Proposition~\ref{prop:id-cut2} an \emph{invariant-cut}. Figure~\ref{sfig:id-ac7-3} depicts the projections (in blue) and an invariant-cut (in red boxes) for the AC in Figure~\ref{sfig:id-ac7-2}. Hence, we conclude that the causal effect \(\Pr_x(y)\) is identifiable. Once we attain an invariant-cut we immediately obtain an identifying formula by plugging in the projections for the nodes on the invariant-cut and evaluating the AC root.  In this example, the identifying formula for the causal effect is \(\Pr_x(y) = \Pr(y | x, a, b)\)\(\Pr(a,b)\) \(+\) \(\Pr(y | x, a, \bar{b})\)\(\Pr(a,\bar{b})\) \(+\) \(\Pr(y | x, \bar{a}, b)\)\(\Pr(\bar{a},b)\) \(+\) \(\Pr(y | x, \bar{a}, \bar{b})\)\(\Pr(\bar{a},\bar{b}),\) which resembles the backdoor adjustment formula. Invariant-cuts may not be unique as the choice of projections is not unique. We may also have multiple ACs with invariant-cuts, which further contributes to the multiplicity of identifying formulas.

Another interesting observation on Proposition~\ref{prop:id-cut2} is that the root of the AC always constitutes a valid invariant-cut if the causal effect is identifiable. Consider the following expression (ratio of two infinite series) for the root: \(\frac{\sum_{M \in \MM} \mathbbm{1}\{\Pr(\V) = \Pr^M(\V)\} \Pr^M_{\x}(\y)}{\sum_{M \in \MM} \mathbbm{1}\{\Pr(\V) = \Pr^M(\V)\}}\) where \(\Pr^M \) denotes the distribution induced by a specific model \(M.\) If \(\Pr_\x(\y)\) is identifiable, the expression is a valid identifying formula for computing \(\Pr_\x(\y)\) since every model \(M\) that induces \(\Pr(\V)\) also induces a same \(\Pr^M_{\x}(\y).\) Hence, the root of the AC always constitutes an invariant-cut in this case since any identifying formula constitutes a projection for the root. However, since lower cuts involve smaller
expressions, finding efficient projections for nodes on such cuts may be easier.

\section{Finding Invariant-Cuts For ACs}
\label{sec:projection}

Developing a complete method for finding invariant-cuts, if they exist, is beyond the scope of this paper as it represents a first step in the proposed direction. However, we will provide a complete method in this section assuming we only have the classical positivity constraint, $\Pr(\V)>0.$ This case has already been solved in the literature using methods such as the ID algorithm and the do-calculus, so 
the results in this section provide another way of solving this case. These results also show that the proposed framework in this paper is at least as powerful as these classical methods.
Moreover, we will use these results in the next section where we show how to handle other types of constraints but in a less complete fashion as far as identifying invariant-cuts.

Our method for finding projections for ACs is based on the notion of c-components, which has been studied extensively in~\citep{uai/TianP02b,TianPearl03,aaai/ShpitserP06,aaai/HuangV06}. Given a causal graph \(G\), a (maximal) c-component is a (maximal) subgraph of \(G\) in which all variables are connected by bidirected paths,\footnote{A bidirected path is a path that consists of bidirected edges in the form of \(X \leftarrow U \rightarrow Y.\) We assume the causal graph \(G\) is semi-Markovian here; that is, all hidden variables are roots in \(G\) and have exactly two children. A more general definition of c-components beyond semi-Markovian can be found in~\citep{uai/TianP02b}.} and each c-component \(S\) is associated with a \emph{c-factor} defined as \(Q[S](\V) = \sum_\U \prod_{U \in \U} \theta_U \prod_{V \in \V} \theta_{V | \P_V}\) where \(\U, \V, \P_V\) denote the hidden variables, observed variables, and parents of \(V.\) Moreover, the causal graph \(G\) can be decomposed into c-components \(\SS = \{S_1, \dots, S_k\}\) such that \(\Pr(\V) = \prod_{S \in \SS} Q[S]\) when \(\Pr(\V) > 0.\) Existing methods in~\citep{TianPearl03,aaai/ShpitserP06,aaai/HuangV06} identify causal effects by decomposing the mutilated graph \(G'\) (under \(do(\x)\)) into c-components \(\SS\) and then checking whether each \(Q[S]\) (for \(S \in \SS\)) has a projection which only involves \(\V.\)\footnote{A mutilated graph under \(do(\x)\) is attained by removing all the incoming edges for \(\X\) from the original causal graph.} 
We next present a notion that summarizes such c-components, which follows from the key results in~\citep{TianPearl03,aaai/HuangV06,aaai/ShpitserP06} and is defined in an inductive way.

\begin{definition}
\label{def:id-ccomp}
Let \(G\) be a causal graph with observed variables \(\V.\) A subgraph \(S\) of \(G\) is \underline{\(\V\)-computable} if one of the following holds: (1) \(S = G\); (2) \(S\) is a maximal c-component of a \(\V\)-computable subgraph; or  (3) \(S\) is the result of pruning a leaf node from a \(\V\)-computable subgraph.
\end{definition}

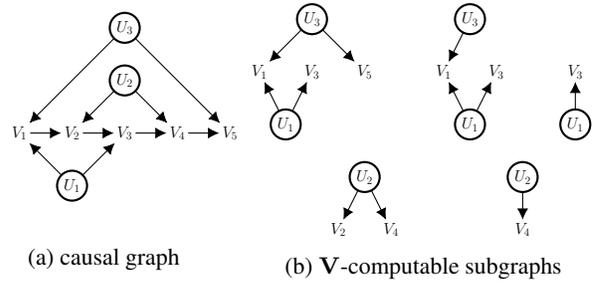
\begin{figure}[tb]
\centering
\begin{subfigure}[r]{0.3\linewidth}
\centering
\begin{tikzpicture}[->=stealth,auto,scale=\dagsize,transform shape]
% x node set with absolute coordinates
\node[font=\huge] (V1) at (0,0) {$V_1$};
\node[font=\huge] (V2) at (2,0) {$V_2$};
\node[font=\huge] (V3) at (4,0) {$V_3$};
\node[font=\huge] (V4) at (6,0) {$V_4$};
\node[font=\huge] (V5) at (8,0) {$V_5$};
\node[state,font=\huge] (U1) at (2,-2) {$U_1$};
\node[state,font=\huge] (U2) at (4,2) {$U_2$};
\node[state,font=\huge] (U3) at (4,4) {$U_3$};

% Directed edge
%\path (A) edge (U);
\path (V1) edge (V2);
\path (V2) edge (V3);
\path (V3) edge (V4);
\path (V4) edge (V5);
\path (U1) edge (V1);
\path (U1) edge (V3);
\path (U2) edge (V2);
\path (U2) edge (V4);
\path (U3) edge (V1);
\path (U3) edge (V5);
\end{tikzpicture}
\caption{causal graph}
\label{sfig:c-components1}
\end{subfigure}
\begin{subfigure}[r]{0.69\linewidth}
\centering
\begin{tikzpicture}[->=stealth,auto,scale=\dagsize,transform shape]
% x node set with absolute coordinates
\node[font=\huge] (V11) at (0,0) {$V_1$};
\node[font=\huge] (V13) at (2,0) {$V_3$};
\node[font=\huge] (V15) at (4,0) {$V_5$};
\node[state,font=\huge] (U11) at (1,-2) {$U_1$};
\node[state,font=\huge] (U13) at (2,2) {$U_3$};

\node[font=\huge] (V21) at (7,0) {$V_1$};
\node[font=\huge] (V23) at (9,0) {$V_3$};
\node[state,font=\huge] (U21) at (8,-2) {$U_1$};
\node[state,font=\huge] (U23) at (8,2) {$U_3$};

\node[font=\huge] (V33) at (12,0) {$V_3$};
\node[state,font=\huge] (U31) at (12,-2) {$U_1$};

\node[state,font=\huge] (U42) at (4,-3) {$U_2$};
\node[font=\huge] (V42) at (3,-5) {$V_2$};
\node[font=\huge] (V44) at (5,-5) {$V_4$};

\node[state,font=\huge] (U52) at (10,-3) {$U_2$};
\node[font=\huge] (V54) at (10,-5) {$V_4$};

% Directed edge
%\path (A) edge (U);
\path (U11) edge (V11);
\path (U11) edge (V13);
\path (U13) edge (V11);
\path (U13) edge (V15);

\path (U21) edge (V21);
\path (U21) edge (V23);
\path (U23) edge (V21);

\path (U31) edge (V33);

\path (U42) edge (V42);
\path (U42) edge (V44);

\path (U52) edge (V54);

\end{tikzpicture}
\caption{\(\V\)-computable subgraphs}
\label{sfig:c-components2}
\end{subfigure}
\caption{\(\V\)-computable subgraphs for a causal graph from~\citep{uai/TianP02b}.}
\label{fig:c-components}
\end{figure}

We can always find a projection for each \(Q[S]\) as stated by the following corollary.
\begin{corollary}
\label{corr:id-ccomp}
Let \(G\) be a causal graph with observed variables \(\V.\) If \(S\) is a \(\V\)-computable subgraph of \(G\), then \(Q[S]\) has a \(\langle G, \V, \MM[\Pr(\V) > 0] \rangle \)-projection.
\end{corollary}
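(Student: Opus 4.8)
The plan is to prove the corollary by induction on the derivation that witnesses $S$ being $\V$-computable, following the three clauses of Definition~\ref{def:id-ccomp}. The engine of the argument is a closure property: the class of expressions that qualify as projections in the sense of Definition~\ref{def:eq-exp} --- those mentioning only observed variables $\V$ and no parameters $\theta$ --- is closed under finite summation, finite products, and, under the constraint $\Pr(\V)>0$, division by a non-vanishing such expression. For the base case $S=G$, the c-factor is exactly the observational joint, $Q[G]=\sum_\U \prod_U \theta_U \prod_V \theta_{V|\P_V}=\Pr(\V)$, which already mentions only observed variables and contains no parameters; hence $\Pr(\V)$ is its own $\langle G,\V,\MM[\Pr(\V)>0]\rangle$-projection. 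In each inductive step I assume, for the $\V$-computable subgraph $H$ from which $S$ is obtained, that $Q[H]$ has a projection $P_H$ (an expression over $\Pr(\V)$), and I build a projection of $Q[S]$ from $P_H$.

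For clause (3), where $S=H\setminus\{L\}$ for a leaf $L$ of $H$, I first establish the marginalization identity $Q[S]=\sum_{L} Q[H]$. This holds because $L$, being a leaf, appears in $Q[H]$ only through its own factor $\theta_{L|\P_L}$ (no observed child references it, and hidden variables are roots), so summing over the states of $L$ replaces $\theta_{L|\P_L}$ by $1$ and leaves precisely the c-factor of $H\setminus\{L\}$. Then $\sum_L P_H$ is equivalent to $Q[S]$, and since summation preserves the property of mentioning only $\V$ and no $\theta$'s, it is a projection of $Q[S]$. This case is routine.

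For clause (2), where $S$ is a maximal c-component of $H$, I invoke the c-factor decomposition of \citep{TianPearl03,aaai/HuangV06}: writing the maximal c-components of $H$ as $S_1,\dots,S_k$, one has $Q[H]=\prod_i Q[S_i]$, and each factor admits the closed-form inversion $Q[S_i]=\prod_{V_j\in S_i} Q^{(j)}[H]/Q^{(j-1)}[H]$, where $V_1<\dots<V_n$ is a topological order of the observed variables of $H$ and $Q^{(j)}[H]=\sum_{\{v_l:\,l>j\}} Q[H]$ is the marginal of the c-factor over the first $j$ variables (so the ratio $Q^{(j)}[H]/Q^{(j-1)}[H]$ plays the role of the conditional of $Q[H]$). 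Substituting $P_H$ for $Q[H]$ throughout this formula yields, by the closure property, an expression over $\Pr(\V)$ alone; it is equivalent to $Q[S_i]$ because $P_H$ is equivalent to $Q[H]$ and the formula is built solely from sums, products, and ratios. Taking $S=S_i$ gives the desired projection. The denominators $Q^{(j-1)}[H]$ are strictly positive under $\Pr(\V)>0$, since c-factors are positive products of conditionals in that regime \citep{uai/TianP02b}, so every division is well defined and stays inside the projection class.

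The main obstacle is clause (2): it rests entirely on the Tian--Pearl c-factor decomposition and its inversion formula, so the work is in (i) correctly stating and applying that formula for an arbitrary subgraph $H$ rather than the full graph $G$ --- in particular, reading the ratios $Q^{(j)}[H]/Q^{(j-1)}[H]$ as conditionals of the sub-distribution $Q[H]$ rather than of $\Pr$ itself --- and (ii) confirming that substituting a projection and forming these ratios never leaves the class of parameter-free, hidden-variable-free expressions, which is exactly where the constraint $\Pr(\V)>0$ is needed to guarantee non-vanishing denominators. Clauses (1) and (3) are straightforward by comparison.
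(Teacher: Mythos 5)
Your proof is correct and takes essentially the same route as the paper's: an induction over the three clauses of Definition~\ref{def:id-ccomp}, with \(Q[G]=\Pr(\V)\) as the base case, marginalization over the pruned leaf for clause~(3), and the generalized Q-decomposition of Tian and Pearl for clause~(2). The only difference is that you spell out the inversion formula \(Q[S_i]=\prod_{V_j\in S_i} Q^{(j)}[H]/Q^{(j-1)}[H]\) and the role of \(\Pr(\V)>0\) in keeping the denominators nonzero, where the paper simply cites the decomposition.
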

To illustrate, Figure~\ref{sfig:c-components2} depicts some \(\V\)-computable subgraphs for the causal graph in Figure~\ref{sfig:c-components1}. The upper-left subgraph \(S\) in Figure~\ref{sfig:c-components2} corresponds to the following c-factor: \(Q[S] = \sum_{U_1,U_3} \theta_{U_1}\) \(\theta_{U_3}\) \(\theta_{V_1 | U_1, U_3}\) \(\theta_{V_3 | U_1, V_2}\) \(\theta_{V_5 | U_3, V_4}.\) 
Now suppose the expression at some AC node \(i\) matches the c-factor for a \(\V\)-computable subgraph, we can replace it with the corresponding projection which only involves \(\V\); see Appendix~\ref{app:c-component} for more details.

Consider the causal graph in Figure~\ref{sfig:id-circuit-ccomp2-graph} with binary observed variables \(\V = \{X, Y, A\}\) and assume \(\Pr(\V) > 0.\) Figure~\ref{sfig:id-circuit-ccomp2-circ2} depicts an AC constructed for the causal effect \(\Pr_x(y).\) If we consider the \(\V\)-computable subgraph \(S\) that contains \(U \rightarrow Y,\) we obtain the following c-factor \(Q[S] = \sum_U \theta_U\theta_{Y | U,A}\) for \(S\), which matches exactly the expression at the \(+\)-node on the left branch of the AC. Hence, we can assign the corresponding projection for the \(+\)-node as shown in Figure~\ref{sfig:id-circuit-ccomp2-circ2}. It turns out that the causal effect is identifiable since we can find an invariant-cut (in red boxed) for the AC.

\begin{figure}[tb]
\centering
\begin{subfigure}[r]{0.25\linewidth}
\centering
\begin{tikzpicture}[->=stealth,auto,scale=\dagsize,transform shape]
% x node set with absolute coordinates
\node[font=\huge] (X) at (0,0) {$X$};
\node[font=\huge] (A) at (2,0) {$A$};
\node[font=\huge] (Y) at (4,0) {$Y$};
\node[state,font=\huge] (U) at (2,2) {$U$};

% Directed edge
%\path (A) edge (U);
\path (U) edge (X);
\path (U) edge (Y);
\path (X) edge (A);
\path (A) edge (Y);
\end{tikzpicture}
\caption{causal graph}
\label{sfig:id-circuit-ccomp2-graph}
\end{subfigure}
\begin{subfigure}[r]{0.73\linewidth}
\centering
\includegraphics[width=\linewidth]{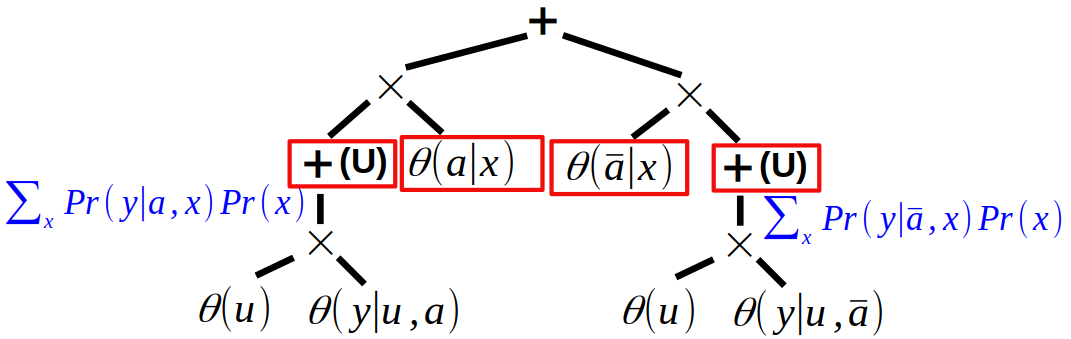}
\caption{projections and an invariant-cut}
\label{sfig:id-circuit-ccomp2-circ2}
\end{subfigure}
\caption{An AC constructed for \(\Pr_x(y)\); an invariant-cut (in red boxes) is found by the c-component method.}
\label{fig:id-circuit-ccomp2}
\end{figure}

The following theorem shows that the \emph{c-component method}, when coupled with carefully designed ACs, is sound and complete for testing identifiability under $\Pr(\V)>0$ like the ID algorithm~\citep{aaai/ShpitserP06} and do-calculus~\citep{pearl00b}.

\begin{theorem}
\label{thm:subsume-id}
A causal effect \(\Pr_\x(\y)\) is identifiable (wrt \(\langle G, \V, \MM[\Pr(\V) > 0] \rangle\)) iff there is an AC for \(\Pr_\x(\y)\) on which the c-component method finds an invariant-cut.\footnote{We assume all variables are binary when constructing the AC.}
\end{theorem}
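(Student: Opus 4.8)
The plan is to prove the two directions separately; call them \emph{soundness} (the c-component method finding an invariant-cut implies identifiability) and \emph{completeness} (identifiability implies that such an AC and invariant-cut exist). Soundness should be essentially immediate from the machinery already in place. If the c-component method succeeds, it exhibits a cut of the AC on which every node's expression is exactly the c-factor \(Q[S]\) of some \(\V\)-computable subgraph \(S\), and it assigns to each such node the projection guaranteed by Corollary~\ref{corr:id-ccomp}. Hence every node on the cut has a \(\langle G,\V,\MM[\Pr(\V)>0]\rangle\)-projection, so the cut is an invariant-cut in the sense of Proposition~\ref{prop:id-cut2}, and that proposition immediately yields identifiability of \(\Pr_\x(\y)\). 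No new content is needed here beyond quoting these two earlier results.

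The completeness direction is where the work lies, and the strategy is to piggy-back on the known completeness of the classical identification algorithms. First I would recall the structural characterization underlying the ID algorithm~\citep{aaai/ShpitserP06} and Tian's decomposition~\citep{TianPearl03,uai/TianP02b}: when \(\Pr_\x(\y)\) is identifiable under \(\Pr(\V)>0\), one restricts attention to the ancestral, mutilated subgraph relevant to \(\Y\) under \(do(\x)\), decomposes it into (maximal) c-components, and writes \(\Pr_\x(\y)=\sum \prod_S Q[S]\), where the sum ranges over the non-outcome observed variables and each \(S\) is obtained from \(G\) by taking maximal c-components and pruning leaves. The point is that these are precisely the operations in Definition~\ref{def:id-ccomp}, so identifiability guarantees that every c-factor \(Q[S]\) appearing in this formula is the c-factor of a \(\V\)-computable subgraph --- and this is exactly where the positivity assumption is consumed, since it licenses the division/marginalization steps in Tian's construction.

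Given this, the remaining task is to realize the same sum-of-products-of-c-factors computation as an AC of the form produced by our construction in Section~\ref{sec:ac-method}, in such a way that each c-factor \(Q[S]\) surfaces as an explicit AC node and the collection of these nodes forms a cut between the root and the leaves. I would do this by compiling the mutilated, treatment-clamped Bayesian network along a variable-elimination order that eliminates the variables of one c-component before crossing to the next, so that the intermediate \(+\)-nodes produced while summing out the hidden and inner variables of a c-component compute precisely \(Q[S]=\sum_\U \prod_{U}\theta_U\prod_V\theta_{V\mid\P_V}\). The c-component method then matches each such node against the c-factor of the corresponding \(\V\)-computable \(S\) and installs the projection from Corollary~\ref{corr:id-ccomp}, producing an invariant-cut; Proposition~\ref{prop:id-cut2} then certifies identifiability, closing the cycle.

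The main obstacle I anticipate is this AC-engineering step: showing that for \emph{every} identifiable effect there is an AC, built by the prescribed construction, whose internal nodes literally spell out the c-factors of the required \(\V\)-computable subgraphs and whose c-factor nodes separate root from leaves. This requires arguing that an elimination order respecting the c-component decomposition both exists and yields nodes whose expressions coincide \emph{syntactically} with the \(Q[S]\)'s (rather than merely agreeing numerically), and that the recursive peeling in Definition~\ref{def:id-ccomp} --- alternating maximal-c-component extraction with leaf pruning --- can be mirrored by nested AC sub-circuits. The binary-variable assumption in the statement is what I would lean on to keep this construction canonical and to guarantee that the matching step fires, and I would make explicit where it is used. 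Everything else --- soundness, and the reduction of completeness to the \(\V\)-computability of the relevant c-components --- reduces to results already established or cited.
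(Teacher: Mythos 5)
Your proposal follows essentially the same route as the paper: soundness via Corollary~\ref{corr:id-ccomp} plus Proposition~\ref{prop:id-cut2}, and completeness by reducing to the c-component recursion of the ID/Tian--Huang--Valtorta algorithms and observing that its two operations (ancestor/leaf pruning and maximal c-component extraction) are exactly the cases of Definition~\ref{def:id-ccomp}, so every c-factor the algorithm identifies belongs to a \(\V\)-computable subgraph and hence has a projection. The one place you diverge is the step you flag as the main obstacle, and it is worth noting that the paper simply does not face it: rather than compiling the mutilated network by variable elimination and arguing that the \(Q[S]\)'s surface \emph{syntactically} as internal \(+\)-nodes, the paper constructs the AC \emph{directly} from the identifying expression \(\sum_{\V\setminus(\X\cup\Y)}\prod_{C\in\CC}Q[C]\), representing each \(Q[C]\) as a sub-circuit in the standard way; since the theorem only asserts the existence of \emph{some} AC for \(\Pr_\x(\y)\), this is enough, and only the top-level c-factors of the mutilated ancestral graph need to appear as nodes (the recursive peeling in Definition~\ref{def:id-ccomp} is used solely to prove each such \(Q[C]\) has a projection, not to build nested sub-circuits). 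Your VE-based construction would likely also work but requires the syntactic-matching argument you describe, which the paper's direct construction makes unnecessary. One further small discrepancy: you locate the use of the binary-variable assumption in the completeness direction, whereas the paper invokes it in the \emph{soundness} direction --- the AC is built over binary domains, so an invariant-cut certifies agreement only among binary parameterizations, and one needs the fact (from the completeness proofs of the ID algorithm, whose unidentifiability witnesses are binary) that this suffices for identifiability over all models; your soundness paragraph glosses over this lifting step.
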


\section{Exploiting Constraints In Causal-Effect Identifiability: Examples}
\label{sec:examples}
We next use examples to demonstrate how our AC-based method can be applied to improve causal-effect identifiability under different types of constraints, some of which have been treated using methods dedicated to such constraints. In particular, we consider three types of constraints: (1) context-specific independencies; (2) functional dependencies; and (3) fully-known \(\Pr(\V),\) and show that we can exploit these constraints to simplify the constructed ACs in order to facilitate the discovery of invariant-cuts.

All the ACs considered in this section are compiled using the variable elimination method discussed in~\citep{Chavira.Darwiche.Ijcai.2007}, with constrained variable elimination orders: first eliminate all hidden variables \(\U,\) then observed variables \(\V\) in a bottom-up order. These ACs may be further simplified using the simplification rules in Appendix~\ref{app:simplify-AC}.

\subsection{Context-Specific Independence}
\label{sec:csi-constraint}

\begin{figure}[tb]
\centering
%\begin{subfigure}[r]{0.49\textwidth}
%\centering
\includegraphics[width=\linewidth]{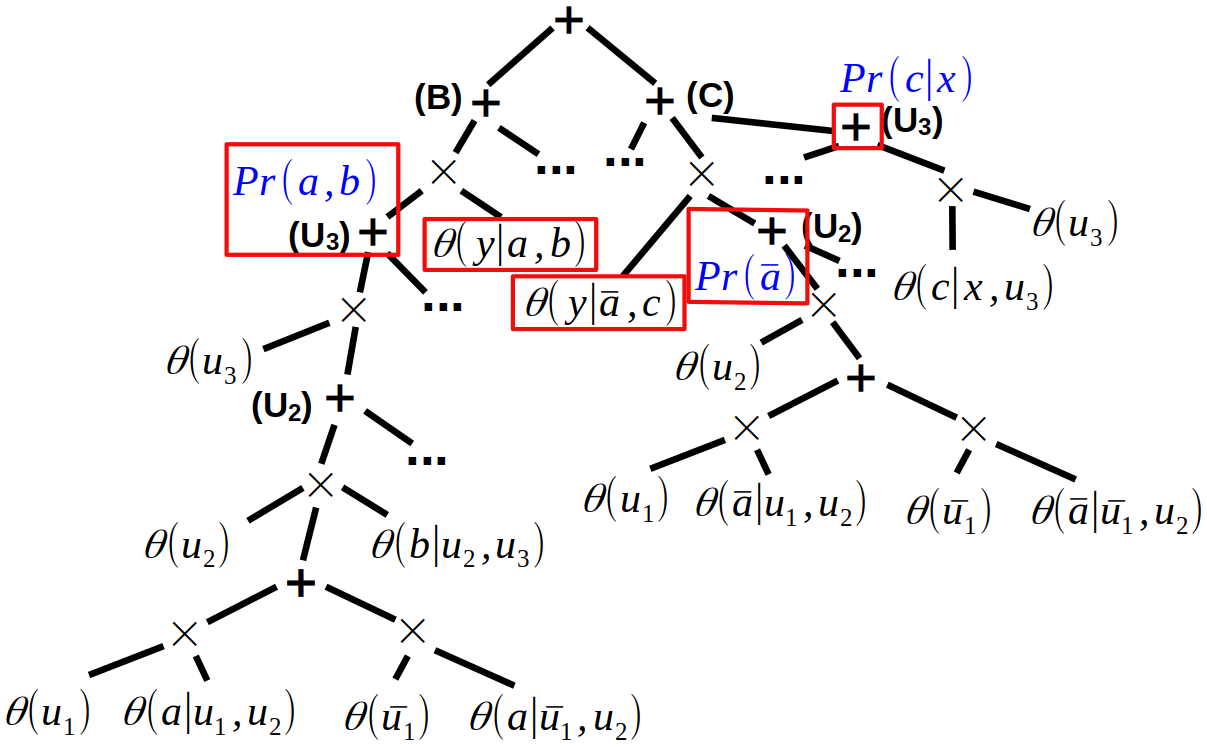}
%\caption{Intepretations under Theorem~\ref{thm:product-rule1}.}
%\label{sfig:id-circuit-int-1}
%\end{subfigure}
%\begin{subfigure}[r]{0.49\textwidth}
%\centering
%\includegraphics[width=\linewidth]{}
%\caption{ID-circuit after the first simplification rule}
%\label{sfig:id-circuit-int-2}
%\end{subfigure}
\caption{AC for Figure~\ref{sfig:weak-weak1} with projections derived with the c-component method; projections are marked near the nodes in blue; the invariant-cut is marked with red boxes.}
\label{fig:id-circuit-int2}
\end{figure}

We first illustrate that the AC-based method can improve identifiability given knowledge of context-specific independence (CSIs)~\citep{uai/BoutilierFGK96, nips/TikkaHK19, pgm/ShenCD20}. CSI has the form (\(Y \indep X | \p\)) where \(X, \P\) are the parents of \(Y.\) That is, \(Y\) is independent of \(X\) when the values of other parents \(\P\) are set to \(\p.\) For simplicity, we write (\(Y \indep X | \P\)) if CSI holds under all instantiations of \(\P.\)

CSI constraints can be incorporated into the construction of ACs as follows. For each CSI relation (\(Y \indep X | \p\)), we merge all the parameters \(\theta_{y | x, \p}\) (which are leaves) in the AC into a single node \(\theta_{y | \p}.\) This is valid since \(\theta_{y | x, \p} = \Pr(y | \p)\) for all \(x\) and \(y\)~\citep{kr/Darwiche02}. We can then apply  simplification rules, such as those provided in Appendix~\ref{app:simplify-AC}, to facilitate the discovery of invariant-cuts.

Consider again the causal graph in Figure~\ref{sfig:weak-weak1} where the causal effect \(\Pr_x(y)\) is not identifiable under the positivity constraint \(\Pr(\V) > 0.\) Assume now the following CSI constraints \(\CC_{csi}\): \((Y \indep C | a, B)\) and \((Y \indep B | \bar{a}, C).\) We can construct the simplified AC in Figure~\ref{fig:id-circuit-int2} using the procedure above. We can then find an invariant-cut for the AC using the c-component method; see Appendix~\ref{app:derive-id-circuit-int2} for the derivation. Hence, we conclude that the causal effect is identifiable wrt \(\langle G, \V, \MM[\CC_{csi}, \Pr(\V) > 0] \rangle.\) We show another example adapted from~\citep{nips/TikkaHK19} in Appendix~\ref{app:csi-ex} to further illustrate the advantages brought by CSI constraints to identification, where our AC-based method yields the same identifying formula as the one found by the method in~\citep{nips/TikkaHK19}.

\subsection{Functional Dependencies}
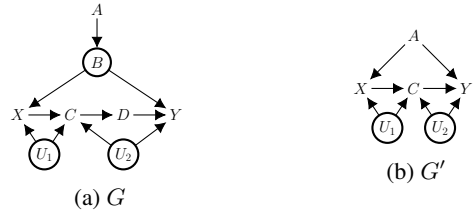
\begin{figure}[tb]
\centering
\begin{subfigure}[r]{0.49\linewidth}
\centering
\begin{tikzpicture}[->=stealth,auto,scale=\dagsize,transform shape]
% x node set with absolute coordinates
\node[font=\huge] (A) at (1,0) {$A$};
\node[state,font=\huge] (B) at (1,-2) {$B$};
\node[font=\huge] (X) at (-2,-4) {$X$};
\node[font=\huge] (C) at (0,-4) {$C$};
\node[font=\huge] (D) at (2,-4) {$D$};
\node[font=\huge] (Y) at (4,-4) {$Y$};
\node[state,font=\huge] (U1) at (-1,-5.5) {$U_1$};
\node[state,font=\huge] (U2) at (2,-5.5) {$U_2$};

% Directed edge
%\path (A) edge (U);
\path (A) edge (B);
\path (B) edge (X);
\path (B) edge (Y);
\path (X) edge (C);
\path (C) edge (D);
\path (D) edge (Y);
\path (U1) edge (X);
\path (U1) edge (C);
\path (U2) edge (C);
\path (U2) edge (Y);
\end{tikzpicture}
\caption{$G$}
\label{sfig:func-dep1}
\end{subfigure}
\begin{subfigure}[r]{0.49\linewidth}
\centering
\begin{tikzpicture}[->=stealth,auto,scale=\dagsize,transform shape]
% x node set with absolute coordinates
\node[font=\huge] (A) at (0,-2) {$A$};
%\node[state,font=\huge] (B) at (0,-2) {$B$};
\node[font=\huge] (X) at (-2,-4) {$X$};
\node[font=\huge] (C) at (0,-4) {$C$};
%\node[font=\huge] (D) at (2,-4) {$D$};
\node[font=\huge] (Y) at (2,-4) {$Y$};
\node[state,font=\huge] (U1) at (-1,-5.5) {$U_1$};
\node[state,font=\huge] (U2) at (1,-5.5) {$U_2$};

% Directed edge
%\path (A) edge (U);
%\path (A) edge (B);
\path (A) edge (X);
\path (A) edge (Y);
\path (X) edge (C);
\path (C) edge (Y);
\path (U1) edge (X);
\path (U1) edge (C);
\path (U2) edge (C);
\path (U2) edge (Y);
\end{tikzpicture}
\caption{$G'$}
\label{sfig:func-dep2}
\end{subfigure}
\caption{\(A,C,D,X,Y\) are observed; \(B,D\) are functional.}
\label{fig:func-dep}
\end{figure}

In a causal graph, a variable \(W\) with parents $\P$ is said to be \emph{functional,} 
or exhibit a \emph{functional dependency,} if \(\theta_{w | \p} \in \{0,1\}\) for all instantiations \(w, \p\) --- we assume that we do not know the specific values for these \(\theta_{w | \p}.\)
Functional dependencies have been recently exploited to improve the identifiability of causal effects~\citep{neurips/ChenDarwiche24} and the computation of causal queries~\citep{DarwicheECAI20b,uai/ChenDarwiche22,ijcai/HanCD23}. 

Let \(\W\) be a subset of variables that exhibit functional dependencies. \citep{neurips/ChenDarwiche24} showed that an unidentifiable causal effect may become identifiable given the functional variables \(\W,\) without the need of knowing the specific functions (\(\theta_{w | \p}\)'s) for \(\W.\) In addition, the work introduced a reduction-based method for testing identifiability under functional dependencies by eliminating (removing) a subset of functional variables \(\Q \subseteq \W\) from the causal graph.\footnote{The result holds if every \(Q \in \Q\) satisfies the following condition: \(Q \notin \X \cup \Y\) and every path from some hidden, non-functional variable to \(Q\) contains an observed variable (other than \(Q\)). See~\citep{neurips/ChenDarwiche24} for details about the elimination operation and its required positivity constraints.} That is, a causal effect \(\Pr_\x(\y)\) is identifiable wrt \(\langle G, \V, \MM \rangle\) iff it is identifiable wrt \(\langle G', \V', \MM' \rangle ,\) where \(\V' = \V \setminus \Q,\) \(G'\) is the result of eliminating \(\Q\) from \(G,\) and \(\MM'\) is obtained from \(\MM\) by changing functional variables \(\W\) to \((\W \setminus \Q).\) Hence, we can use the elimination of functional variables as a \emph{preprocessing mechanism} to reduce the identifiability problem with functional dependencies into classical identifiability (without functional dependencies).

When the states of variables are known, the if-direction of the result still holds. If we can show that \(\Pr_\x(\y)\) is identifiable wrt \(\langle G', \V', \MM' \rangle\) using the AC-based method, the same causal effect is guaranteed to be identifiable wrt the original \(\langle G, \V, \MM \rangle\) with functional dependencies. We next illustrate this with an example. Consider the causal graph \(G\) in Figure~\ref{sfig:func-dep1} with observed variables \(\V = \{A, C, D, X ,Y\}\) and the causal effect \(\Pr_x(y)\) wrt \(\langle G, \V, \MM[\Pr(A, C, X, Y) > 0, (C \indep U_1 | x, U_2), \W=\{B, D\}]\rangle.\) We first apply the preprocessing mechanism to eliminate the functional variables, which yields the causal graph \(G'\) in Figure~\ref{sfig:func-dep2}. Since there is no functional variables in \(G'\), we can now apply the AC-based method and show that \(\Pr_x(y)\) is identifiable wrt \(\langle G', \V'=\{A,C,X,Y\}, \MM'[\Pr(A, C, X, Y) > 0, (C \indep U_1 | x, U_2)]\rangle\); see Appendix~\ref{app:derive-func} for a detailed derivation. This implies that the causal effect is identifiable wrt the original identifiability tuple (with functional dependencies). 
Moreover, this example shows how the proposed framework can treat a combination of constraint types, in the form of CSI and functional dependencies.

\subsection{Fully-Known \(\Pr(\V)\)}
When an observational distribution \(\Pr^{\star}(\V)\) is given in addition to the causal graph \(G,\) we have an identifiability problem (wrt \(\langle G, \V, \MM[\Pr(\V) = \Pr^{\star}(\V)]\rangle\)) as defined previously.
This is quite common in practice when data over observed variables \(\V\) allows us to estimate \(\Pr^{\star}(\V)\) accurately.

Knowing \(\Pr^{\star}(\V)\) leads to several advantages. First, the positivity constraint over \(\Pr^{\star}(\V)\) becomes immediate since all the zero entries are determined by \(\Pr^{\star}(\V).\) Second, all the (well-defined) parameters \(\theta_{v|\p}\) over observed variables \(V, \P\) can be fixed to constant values in the AC. This allows us to exploit more parameter-specific knowledge such as 0/1 parameters and equal parameters (subsumes CSI) as exhibited by \(\Pr^{\star}(\V).\) One key observation is that a causal effect \(\Pr_\x(\y)\) is identifiable under a known \(\Pr^{\star}(\V)\) iff \(\Pr_\x(\y)\) always evaluates to a \emph{fixed constant} under all models that induce \(\Pr^{\star}(\V)\). This result follows directly from Proposition~\ref{prop:did}. In fact, the constant can always be evaluated by computing \(\Pr_\x(\y)\) under any single model that induces \(\Pr^{\star}(\V).\)

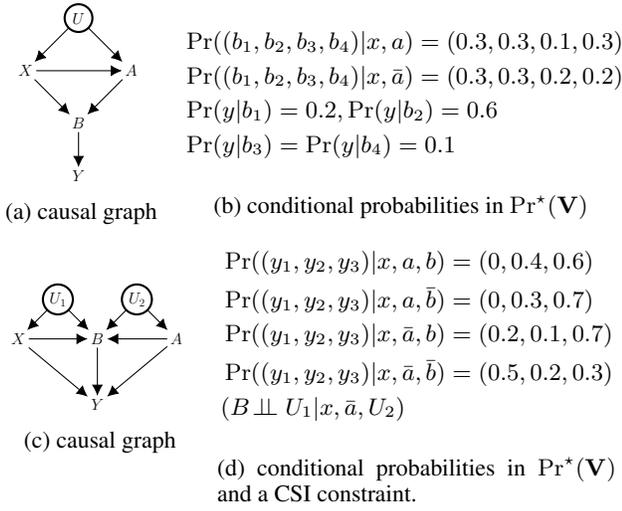
\begin{figure}[tb]
\centering
\begin{subfigure}[r]{0.3\linewidth}
\centering
\begin{tikzpicture}[->=stealth,auto,scale=\dagsize,transform shape]
% x node set with absolute coordinates
\node[font=\huge] (X) at (0,0) {$X$};
\node[state,font=\huge] (U) at (2,2) {$U$};
\node[font=\huge] (A) at (4,0) {$A$};
\node[font=\huge] (B) at (2,-2) {$B$};
\node[font=\huge] (Y) at (2,-4) {$Y$};

% Directed edge
%\path (A) edge (U);
\path (U) edge (X);
\path (U) edge (A);
\path (X) edge (A);
\path (X) edge (B);
\path (A) edge (B);
\path (B) edge (Y);
\end{tikzpicture}
\caption{causal graph}
\label{sfig:func-eg11}
\end{subfigure}
\begin{subfigure}[r]{0.69\linewidth}
\centering
\footnotesize
\begin{equation*}
\begin{split}
&\Pr((b_1,b_2,b_3,b_4) | x, a) = (0.3, 0.3, 0.1, 0.3)\\
&\Pr((b_1,b_2,b_3,b_4) | x, \bar{a}) = (0.3, 0.3, 0.2, 0.2)\\
&\Pr(y | b_1) = 0.2, \Pr(y | b_2) = 0.6\\
& \Pr(y | b_3) = \Pr(y | b_4) = 0.1\\
\end{split}
\end{equation*}
\caption{conditional probabilities in \(\Pr^\star(\V)\)}
\label{sfig:func-eg12}
\end{subfigure}
\begin{subfigure}[r]{0.35\linewidth}
\centering
\begin{tikzpicture}[->=stealth,auto,scale=\dagsize,transform shape]
% x node set with absolute coordinates
\node[state,font=\huge] (U1) at (0,0) {$U_1$};
\node[font=\huge] (X) at (-1.5,-1.5) {$X$};
\node[font=\huge] (B) at (1.5,-1.5) {$B$};
\node[state,font=\huge] (U2) at (3,0) {$U_2$};
\node[font=\huge] (A) at (4.5,-1.5) {$A$};
\node[font=\huge] (Y) at (1.5,-4) {$Y$};

% Directed edge
%\path (A) edge (U);
\path (U1) edge (X);
\path (U1) edge (B);
\path (U2) edge (B);
\path (U2) edge (A);
\path (X) edge (B);
\path (A) edge (B);
\path (X) edge (Y);
\path (B) edge (Y);
\path (A) edge (Y);
\end{tikzpicture}
\caption{causal graph}
\label{sfig:func-eg13}
\end{subfigure}
\begin{subfigure}[r]{0.63\linewidth}
\centering
\footnotesize
\begin{equation*}
\begin{split}
&\Pr((y_1,y_2,y_3) | x, a, b) = (0, 0.4, 0.6)\\
&\Pr((y_1,y_2,y_3) | x, a, \bar{b}) = (0, 0.3, 0.7)\\
&\Pr((y_1,y_2,y_3) | x, \bar{a}, b) = (0.2, 0.1, 0.7)\\
&\Pr((y_1,y_2,y_3) | x, \bar{a}, \bar{b}) = (0.5, 0.2, 0.3)\\
& (B \indep U_1|x,\bar{a},U_2)\\
\end{split}
\end{equation*}
\caption{conditional probabilities in \(\Pr^\star(\V)\) and a CSI constraint.}
\label{sfig:func-eg14}
\end{subfigure}
\caption{Examples for fully-known \(\Pr(\V).\)}
\label{fig:func-eg1}
\end{figure}

Consider the causal graph in Figure~\ref{sfig:func-eg11} where variable \(B\) has states \(b_1, b_2, b_3, b_4\) and all other variables are binary. Suppose \(\Pr^{\star}(\V) > 0\) satisfies the conditional probabilities in Figure~\ref{sfig:func-eg12}. We show that the causal effect \(\Pr_x(y)\) is identifiable under the fixed \(\Pr^{\star}(\V) > 0\) (this causal effect is not identifiable if $\Pr(\V)$ is not fixed to \(\Pr^{\star}(\V)\)).
Figure~\ref{fig:known-eg1} depicts the AC for the causal effect after plugging in the known constant values for parameters. If we further apply simplification rules on the AC, we obtain an AC which contains a single leaf with constant 0.28; see~Appendix~\ref{app:derive-func-eg12} for more details. That is, the causal effect \(\Pr_x(y) = 0.28\) for all models that induce \(\Pr^{\star}(\V)\) and is thus identifiable. 

The constraint of known \(\Pr(\V)\) can be combined with other types of constraints to further improve the identifiability of causal effects. We next show an example that assumes both known \(\Pr(\V)\) and context-specific independences. Consider the causal 
graph in Figure~\ref{sfig:func-eg13} where variable \(Y\) has three states \(y_1, y_2, y_3\) and all other variables are binary. Suppose the fixed \(\Pr^\star(\V)\) satisfies the conditional probabilities in Figure~\ref{sfig:func-eg14} and the CSI constraint (\(B \indep U_1|x,\bar{a},U_2\)). We next show that the causal effect \(\Pr_x(y_1)\) is identifiable under these constraints. Note that strict positivity no longer holds in this example due to the zero entries in the CPT of \(Y.\) In fact, the existence of zero entries enables the identifiability since it allows us to prune nodes that evaluate to zero from the AC. Figure~\ref{sfig:known-eg12} depicts the AC for \(\Pr_x(y_1)\) after applying the simplification rules in Appendix~\ref{app:simplify-AC} along with the projections and invariant-cut. Similar to the examples in Figure~\ref{fig:func-dep} and Appendix~\ref{app:csi-ex}, we need to modify the c-component method slightly to find the projections; see Appendix~\ref{app:derive-known-pr} for details. The identifying formula in this case is
\(\Pr_x(y_1) = 0.2\Pr(\bar{a},b|x) + 0.5\Pr(\bar{a},\bar{b}|x)\) which can be evaluated using the known \(\Pr^\star(\V).\)

\begin{figure}[tb]
\centering
\begin{subfigure}[r]{0.49\linewidth}
\centering
\includegraphics[width=\linewidth]{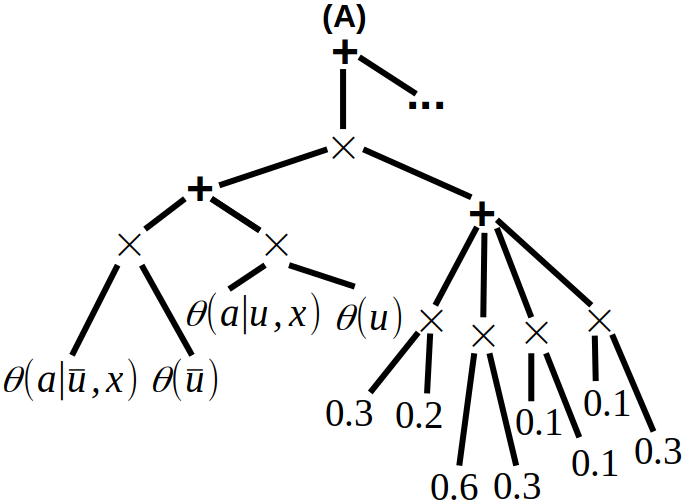}
\caption{}
\label{sfig:known-eg11}
\end{subfigure}
\begin{subfigure}[r]{0.49\linewidth}
\centering
\includegraphics[width=\linewidth]{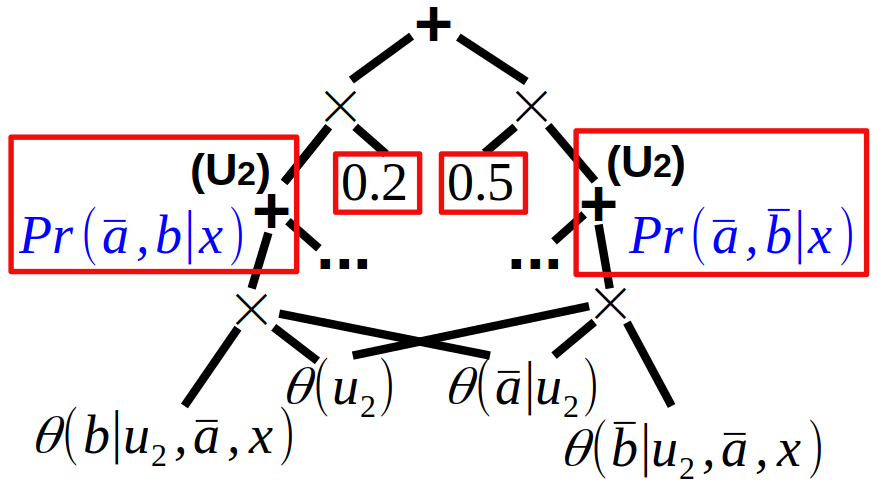}
\caption{}
\label{sfig:known-eg12}
\end{subfigure}
%\begin{subfigure}[r]{0.49\textwidth}
%\centering
%\includegraphics[width=\linewidth]{figs/id-circ3.png}
%\caption{ID-circuit after the first simplification rule}
%\label{sfig:id-circuit-int-2}
%\end{subfigure}
\caption{ACs constructed for Figure~\ref{sfig:func-eg11} and Figure~\ref{sfig:func-eg13} under known \(\Pr^{\star}(\V).\)}
\label{fig:known-eg1}
\end{figure}

\section{Conclusion}
\label{sec:conclusion}
We formalized the notion of constrained identifiability which targets causal-effect identifiability in the presence of various types of constraints. We also proposed an approach for testing constrained identifiability based on constructing an Arithmetic Circuit (AC) for computing the causal effect using the causal graph and available constraints, and then finding invariant-cuts. We showed that this method is at least as complete as classical methods such as the ID algorithm and the do-calculus (which handle strict positivity constraints). We further demonstrated with examples how this approach can be used for testing causal-effect identifiability under various types of constraints, including context-specific independencies, functional dependencies, and fully-known observational distributions,
therefore improving the completeness of classical methods when going beyond strict positivity constraints.

\bibliography{aaai25}

\newpage
\appendix
\section{Simplifying Arithmetic Circuits}
\label{app:simplify-AC}
The constructed AC can be simplified (reduced) by exploiting parameter-specific information such as 0/1 entries and equal parameters. While most of the simplifications can be handled by the \textsc{ace} package,\footnote{http://reasoning.cs.ucla.edu/ace.} we introduce three more rules that may lead to additional simplifications:

\begin{enumerate}
\item (sum-out) For a \(+\)-node in the AC, if it computes \(\sum_{z} \Pr(z | \w)\), replace the \(+\)-node with constant \(1.\)
\item (merge) If two nodes in the AC are same and have a same set of children, merge them into a single node.
\item (distributivity) For a \(+\)-node in the AC, if it computes \(\sum_i (c \cdot s_i)\), simplify it to \(c \cdot \sum_i s_i.\)
\end{enumerate}

These rules can be applied recursively to simplify the AC to the maximum extent. Note that we can always omit AC nodes that evaluate to zero and nodes that evaluate to one if their parent is a \(\times\)-node. Please check out examples and their derivations in Appendix~\ref{app:examples-derivations} for more details.

\section{C-component Method For Finding Projections}
\label{app:c-component}
The c-component method finds projections by decomposing a causal graph into (maximal) c-components~\citep{uai/TianP02b,TianPearl03,aaai/ShpitserP06}.\footnote{Any causal graph can be decomposed into c-components as shown in~\citep{uai/TianP02b,TianPearl03}, though it is common to consider semi-Markovian models in which all hidden variables are roots and have two children.} 
As shown in the main paper, a causal graph induces a set of \(\V\)-computable subgraphs.
Moreover, each \(\V\)-computable subgraph \(S\) induces a c-factor~\citep{uai/TianP02b,TianPearl03,aaai/HuangV06} which we denote as \(Q[S].\) For example, the \(\V\)-computable subgraphs in Figure~\ref{sfig:c-components2} correspond to the following formulas:

\begin{itemize}
\item \(Q[S_1] = \sum_{U_1,U_3} \theta_{U_1}\theta_{U_3} \theta_{V_1 | U_1, U_3} \theta_{V_3 | U_1, V_2} \theta_{V_5 | U_3, V_4}\)
\item \(Q[S_2] = \sum_{U_1,U_3} \theta_{U_1}\theta_{U_3} \theta_{V_1 | U_1, U_3} \theta_{V_3 | U_1, V_2}\)
\item \(Q[S_3] = \sum_{U_1} \theta_{U_1}\theta_{V_3 | U_1, V_2}\)
\item \(Q[S_4] = \sum_{U_2} \theta_{U_2}\theta_{V_2 | U_2, V_1}\theta_{V_4 | U_2, V_3}\)
\item \(Q[S_5] = \sum_{U_2} \theta_{U_2}\theta_{V_4 | U_2, V_3}\)
\end{itemize}

To see why the c-factors for \(\V\)-computable subgraphs can be computed from \(\Pr(\V),\)
consider the following inductive argument. For the original causal graph \(G,\) \(Q[G] = \Pr(\V)\) which is a projection. Whenever we prune a leaf node \(W\) from a \(\V\)-computable graph \(G\) to obtain a new graph \(G'\) (case~(3) in Definition~\ref{def:id-ccomp}), the c-factor for \(G'\) is equal to \(Q[G'] = \sum_W Q[G]\), which has a projection since \(Q[G]\) has a projection. Whenever we decompose a \(\V\)-computable subgraph \(G\) into c-components \(\SS = \{S_1, \dots, S_k\}\) (case~(2) in Definition~\ref{def:id-ccomp}), we can compute each \(Q[S]\) for each \(S \in \SS\) from \(Q[G]\) using the ``Generalized Q-decomposition'' in~\citep{TianPearl03}. Hence, each \(Q[S]\) has a projection since \(Q[G]\) has a projection.

The projections for the c-factors of \(\V\)-computable subgraphs in Figure~\ref{sfig:c-components2} are shown below, where \(\equiv\) denotes the equivalence between the c-factor and projection.
\begin{itemize}
\item \(Q[S_1] \equiv \Pr(V_1) \Pr(V_3 | V_1, V_2) \Pr(V_5 | V_1, V_2, V_3, V_4)\)
\item \(Q[S_2] \equiv \Pr(V_1) \Pr(V_3 | V_1, V_2)\)
\item \(Q[S_3] \equiv \sum_{V_1} \Pr(V_1) \Pr(V_3 | V_1, V_2)\)
\item \(Q[S_4] \equiv \Pr(V_2 | V_1) \Pr(V_4 | V_1, V_2, V_3)\)
\item \(Q[S_5] \equiv \sum_{V_2} \Pr(V_2 | V_1) \Pr(V_4 | V_1, V_2, V_3)\)
\end{itemize}

\section{Examples and Derivations}
\label{app:examples-derivations}
\subsection{Derivation For Figure~\ref{fig:id-circuit-int2}}
\label{app:derive-id-circuit-int2}
Figure~\ref{fig:ac-construct} depicts the original AC for \(\Pr_x(y)\) under the CSI constraints after merging \(\theta_{y | a,b,c}\) and \(\theta_{y | a, b, \bar{c}}\) into \(\theta_{y | a, b}\), and merging \(\theta_{y | \bar{a}, b, c}\) and \(\theta_{y | \bar{a}, \bar{b}, c}\) into \(\theta_{y | \bar{a}, c}\) (merged nodes are color-coded in blue). We can further simplify the AC by pushing the \(+\)-nodes downwards (distributivity rule) and summing-out the leaf nodes (sum-out rule). We color-coded the removed \(+\)-nodes and leaf nodes in red in Figure~\ref{fig:ac-construct}. Finally, we apply the distributivity rule on \(+ (U_3)\)-node on the right branch, which yields the simplified AC in Figure~\ref{fig:id-circuit-int2}.

We can now apply the c-component method to derive projections for the AC nodes. In particular, The causal graph in Figure~\ref{sfig:weak-weak1} induces the following c-factors:
\begin{equation*}
\begin{split}
&Q[S_1] = \sum_{U_1,U_2,U_3} \theta_{U_1}\theta_{U_2}\theta_{U_3}\theta_{A | U_1,U_2}\theta_{B | U_2,U_3} \equiv \Pr(A,B)\\
&Q[S_2] = \sum_{U_1,U_2} \theta_{U_1}\theta_{U_2}\theta_{A | U_1,U_2} \equiv \Pr(A)\\
&Q[S_3] = \sum_{U_3} \theta_{U_3}\theta_{C | U_3,X} \equiv \Pr(C|X)\\
\end{split}
\end{equation*}
Each of the above c-factors matches some node in the AC. Moreover, after plugging in the projections for the AC nodes, we find an invariant-cut as shown in Figure~\ref{fig:id-circuit-int2}. We thus conclude that the causal effect is identifiable.

\begin{figure}[tb]
\centering
%\begin{subfigure}[r]{0.7\textwidth}
%\centering
\includegraphics[width=\linewidth]{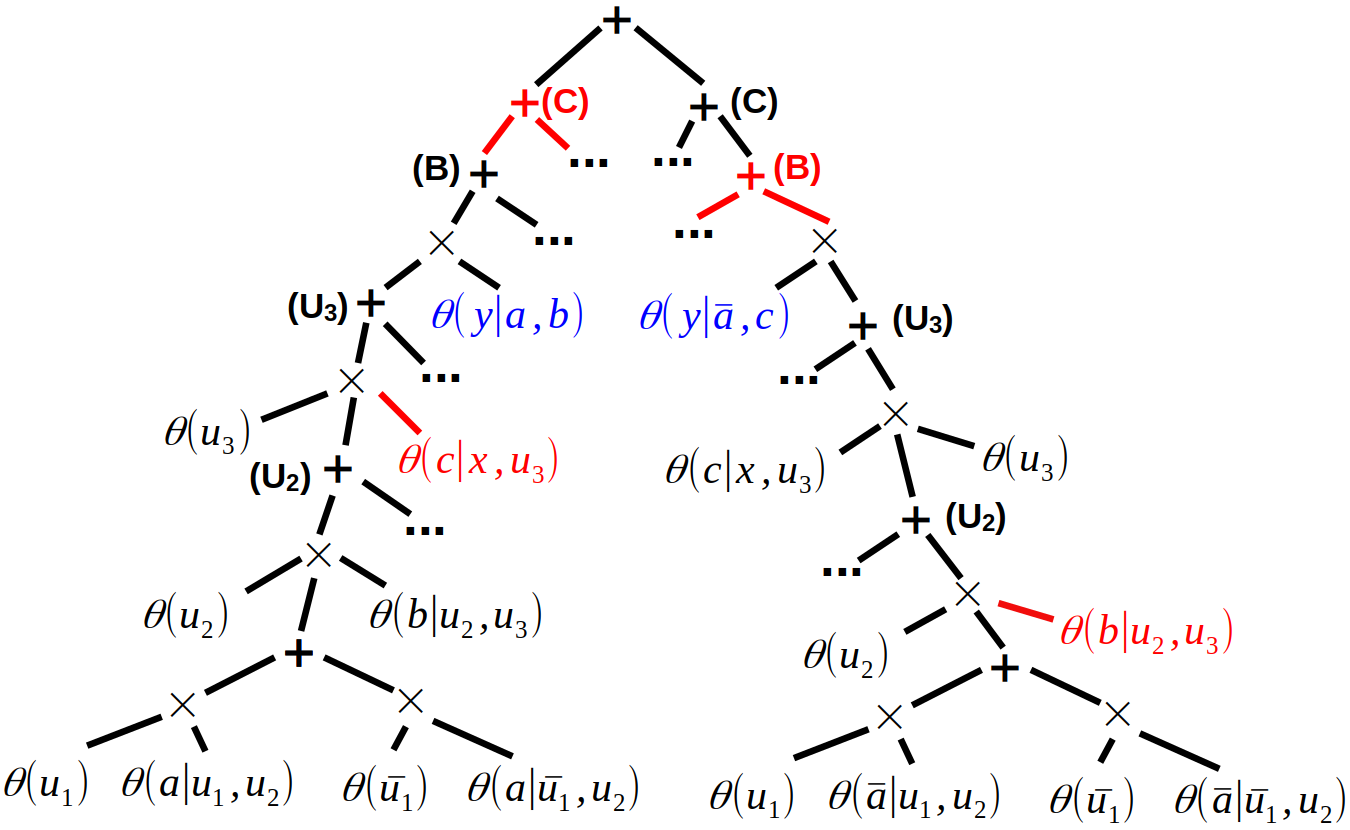}
%\caption{ID-circuit}
%\label{sfig:id-circuit}
%\end{subfigure}
\caption{Original AC compiled from Figure~\ref{sfig:weak-weak1} under CSI constraints \((Y\indep C|A=a, B)\) and \((Y \indep B|A=\bar{a}, C).\)}
\label{fig:ac-construct}
\end{figure}

\begin{figure}[tb]
\centering
\begin{subfigure}[r]{0.3\linewidth}
\centering
\begin{tikzpicture}[->=stealth,auto,scale=\dagsize,transform shape]
% x node set with absolute coordinates
\node[state,font=\huge] (U1) at (0.5,0) {$U_1$};
\node[font=\huge] (A) at (1.5,-1.5) {$A$};
\node[state,font=\huge] (U2) at (3,0) {$U_2$};
\node[font=\huge] (B) at (4.5,-1.5) {$B$};
\node[state,font=\huge] (U3) at (5.5,0) {$U_3$};

% Directed edge
%\path (A) edge (U);
\path (U1) edge (A);
\path (U2) edge (A);
\path (U2) edge (B);
\path (U3) edge (B);
\end{tikzpicture}
\caption{$S_1$}
\label{sfig:csi1-comp-1}
\end{subfigure}
\begin{subfigure}[r]{0.3\linewidth}
\centering
\begin{tikzpicture}[->=stealth,auto,scale=\dagsize,transform shape]
% x node set with absolute coordinates
\node[state,font=\huge] (U1) at (0,0) {$U_1$};
\node[font=\huge] (A) at (1.5,-1.5) {$A$};
\node[state,font=\huge] (U2) at (3,0) {$U_2$};

% Directed edge
%\path (A) edge (U);
\path (U1) edge (A);
\path (U2) edge (A);
\end{tikzpicture}
\caption{$S_2$}
\label{sfig:csi1-comp-2}
\end{subfigure}
\begin{subfigure}[r]{0.3\linewidth}
\centering
\begin{tikzpicture}[->=stealth,auto,scale=\dagsize,transform shape]
% x node set with absolute coordinates
\node[state,font=\huge] (U3) at (0,0) {$U_3$};
\node[font=\huge] (C) at (0,-2) {$C$};

% Directed edge
%\path (A) edge (U);
\path (U3) edge (C);
\end{tikzpicture}
\caption{$S_3$}
\label{sfig:csi1-comp-3}
\end{subfigure}
\caption{\(\V\)-computable subgraphs for Figure~\ref{sfig:weak-weak1}.}
\label{fig:csi-compt}
\end{figure}
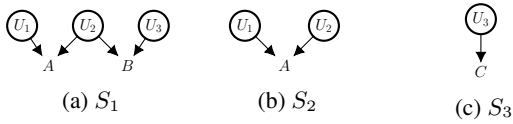

\subsection{Additional Example on CSI}
\label{app:csi-ex}
The following example is adapted from~\citep{nips/TikkaHK19}. Consider the causal graph \(G\) in Figure~\ref{sfig:csi-ex1} with \(\V = \{A, X, Y\}.\) The causal effect \(\Pr_x(y)\) is not identifiable under positivity constraint \(\Pr(\V) > 0\). However, the causal effect becomes identifiable under the CSI constraints (\(X \indep U | \bar{a}\)) and (\(Y \indep U | a, X\)). Figure~\ref{sfig:csi-ex2} depicts an AC with an invariant-cut for the causal effect.

We can find the projections in Figure~\ref{sfig:csi-ex2} using the c-component method \emph{but with a slight modification to leverage the CSI constraints.}
In particular, consider the \(+\)-node on the right branch, which has the following expression:
\[\theta_{u}\theta_{y | u, x, \bar{a}}  + \theta_{\bar{u}}\theta_{y | \bar{u}, x, \bar{a}}\]
This expression does not directly match any c-factors induced by \(\V\)-computable subgraphs. However, the c-factor for the \(\V\)-computable subgraph \(S\) in Figure~\ref{sfig:csi-ex3}
\begin{equation*}
\begin{split}
&\sum_U \theta_{U} \theta_{Y | U, X, A}\theta_{X | U, A} \equiv  \Pr(X, Y | A)
\end{split}
\end{equation*}
can be further simplified when we fix \(A = \bar{a}.\) In particular, the CSI constraint (\(X \indep U | \bar{a}\)) allows us to replace \(\theta_{X | \bar{a},U}\) with \(\theta_{X | \bar{a}}\) in the c-factor, which yields
\[\sum_u \theta_{u} \theta_{Y | u, X, \bar{a}} \theta_{X | \bar{a}} \equiv  \Pr(X,Y | \bar{a}) \]
That is, the new graph \(S'\) in Figure~\ref{sfig:csi-ex4} is also \(\V\)-computable when we fix \(A = \bar{a}.\)
This allows us to further decompose \(S'\) into c-components by Definition~\ref{def:id-ccomp} and attain the following c-factor and corresponding projection:
\begin{equation*}
\begin{split}
&\sum_u \theta_{u}\theta_{Y | u, X, \bar{a}} \equiv \Pr(Y | X, \bar{a})\\
\end{split}
\end{equation*}
This example shows that CSI constraints may empower additional c-component decompositions and hence discover a larger set of \(\V\)-computable subgraphs.

\begin{figure}[tb]
\centering
\begin{subfigure}[r]{0.3\linewidth}
\centering
\begin{tikzpicture}[->=stealth,auto,scale=\dagsize,transform shape]
% x node set with absolute coordinates
\node[state,font=\huge] (U) at (0,0) {$U$};
\node[font=\huge] (X) at (-2,-2) {$X$};
\node[font=\huge] (Y) at (2,-2) {$Y$};
\node[font=\huge] (A) at (0,-4) {$A$};

% Directed edge
%\path (A) edge (U);
\path (U) edge (X);
\path (U) edge (Y);
\path (X) edge (Y);
\path (A) edge (X);
\path (A) edge (Y);
\end{tikzpicture}
\caption{causal graph}
\label{sfig:csi-ex1}
\end{subfigure}
\begin{subfigure}[r]{0.69\linewidth}
\centering
\includegraphics[width=\linewidth]{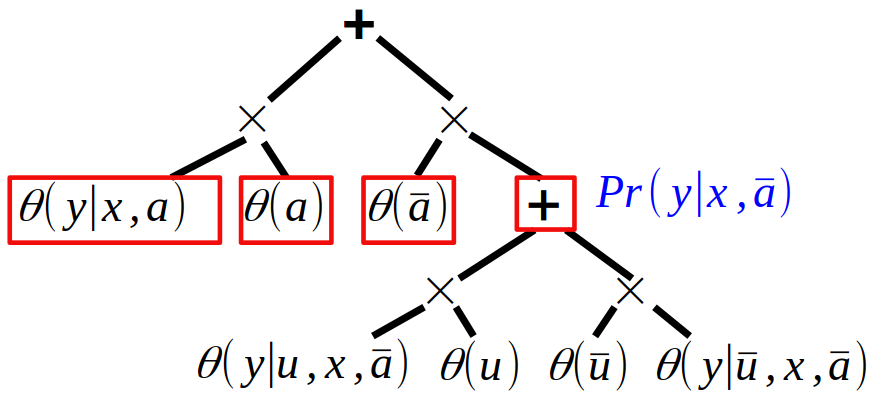}
\caption{ID-AC}
\label{sfig:csi-ex2}
\end{subfigure}
\begin{subfigure}[r]{0.49\linewidth}
\centering
\begin{tikzpicture}[->=stealth,auto,scale=\dagsize,transform shape]
% x node set with absolute coordinates
\node[state,font=\huge] (U) at (0,0) {$U$};
\node[font=\huge] (X) at (-2,-2) {$X$};
\node[font=\huge] (Y) at (2,-2) {$Y$};

% Directed edge
%\path (A) edge (U);
\path (U) edge (X);
\path (U) edge (Y);
\path (X) edge (Y);
\end{tikzpicture}
\caption{$S$}
\label{sfig:csi-ex3}
\end{subfigure}
\begin{subfigure}[r]{0.49\linewidth}
\centering
\begin{tikzpicture}[->=stealth,auto,scale=\dagsize,transform shape]
% x node set with absolute coordinates
\node[state,font=\huge] (U) at (0,0) {$U$};
\node[font=\huge] (X) at (-2,-2) {$X$};
\node[font=\huge] (Y) at (2,-2) {$Y$};

% Directed edge
%\path (A) edge (U);
%\path (U) edge (X);
\path (U) edge (Y);
\path (X) edge (Y);
\end{tikzpicture}
\caption{$S'$}
\label{sfig:csi-ex4}
\end{subfigure}
\caption{CSI example adapted from~\citep{nips/TikkaHK19} and an AC with an invariant-cut for \(\Pr_x(y).\)}
\label{fig:c-components-set1}
\end{figure}

\subsection{Derivation For Figure~\ref{fig:func-dep}}
\label{app:derive-func}
We show that the causal effect \(\Pr_x(y)\) is identifiable wrt the causal graph \(G\) in Figure~\ref{sfig:func-dep2} under the CSI constraint \((C \indep U_1 | x, U_2).\)
We first construct the AC for \(\Pr_x(y)\) in Figure~\ref{sfig:func-ex1}, which incorporates the CSI constraint with the method in Section~\ref{sec:csi-constraint}. We then simplify the AC using the simplification rules in Appendix~\ref{app:simplify-AC}, which yields Figure~\ref{sfig:func-ex2}. 

To find an invariant-cut, we need to find a projection for the following expression:
\begin{equation*}
\begin{split}
\sum_{u_2} \theta_{u_2}\theta_{c|u_2,x}\theta_{y | u_2,A,C}
\end{split}
\end{equation*}
which does not correspond to any c-factors directly. We now consider the c-factor of the \(\V\)-computable subgraph $S$ in Figure~\ref{sfig:func-ex3}
\begin{equation*}
\begin{split}
Q[S] &= \sum_{U_1,U_2} \theta_{U_1}\theta_{U_2}\theta_{X|U_1,A}\theta_{C|U_1,U_2,X}\theta_{Y | U_2,A,C} \\
& \equiv \Pr(X,C,Y | A)\\
\end{split}
\end{equation*}
If we fix \(X=x,\) we can leverage the CSI constraint and replace \(\theta_{C|U_1,U_2,x}\) with \(\theta_{C | U_2, x},\) which yields
\begin{equation*}
\begin{split}
&\sum_{u_1,u_2} \theta_{u_1}\theta_{u_2}\theta_{x|u_1,A}\theta_{C|u_1,u_2,x}\theta_{Y | u_2,A,C} \\
&= \sum_{u_1,u_2} \theta_{u_1}\theta_{u_2}\theta_{x|u_1,A}\theta_{C|u_2,x}\theta_{Y | u_2,A,C}\\
& \equiv \Pr(x,C,Y | A)\\
\end{split}
\end{equation*}
Hence, the subgraph \(S'\) in Figure~\ref{sfig:csi-ex4} is also \(\V\)-computable under \(X=x.\) We can further decompose \(S'\) into c-components and obtain the following c-factor:
\begin{equation*}
\begin{split}
\sum_{u_2} \theta_{u_2}\theta_{C|u_2,x}\theta_{Y | u_2,A,C} \equiv \Pr(C,Y | A,x)
\end{split}
\end{equation*}
This c-factor can now be used as a projection for the \(+ (U_2)\)-node in the AC in Figure~\ref{sfig:func-dep2}. We can now find an invariant-cut (marked in red) and get an identifying formula \(\Pr_x(y) = \sum_a \Pr(a) \sum_c \Pr(c,y | a,x).\)

\begin{figure}[tb]
\centering
\begin{subfigure}[r]{0.30\linewidth}
\centering
\includegraphics[width=\linewidth]{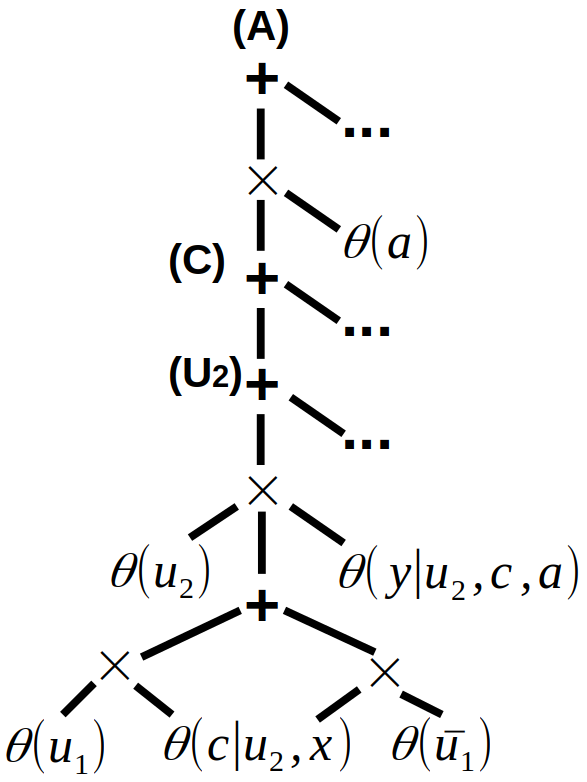}
\caption{AC}
\label{sfig:func-ex1}
\end{subfigure}
\begin{subfigure}[r]{0.30\linewidth}
\centering
\includegraphics[width=0.9\linewidth]{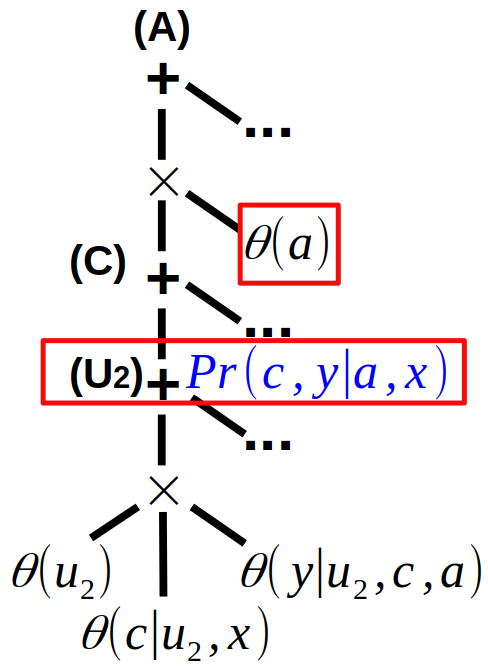}
\caption{Simplified AC}
\label{sfig:func-ex2}
\end{subfigure}
\begin{minipage}[s][2.5cm]{.38\linewidth}
\begin{subfigure}[r]{\linewidth}
\centering
\begin{tikzpicture}[->=stealth,auto,scale=\dagsize,transform shape]
% x node set with absolute coordinates
%\node[state,font=\huge] (B) at (0,-2) {$B$};
\node[font=\huge] (X) at (-2,-4) {$X$};
\node[font=\huge] (C) at (0,-4) {$C$};
%\node[font=\huge] (D) at (2,-4) {$D$};
\node[font=\huge] (Y) at (2,-4) {$Y$};
\node[state,font=\huge] (U1) at (-1,-5.5) {$U_1$};
\node[state,font=\huge] (U2) at (1,-5.5) {$U_2$};

% Directed edge
%\path (A) edge (U);
%\path (A) edge (B);
\path (X) edge (C);
\path (C) edge (Y);
\path (U1) edge (X);
\path (U1) edge (C);
\path (U2) edge (C);
\path (U2) edge (Y);
\end{tikzpicture}
\caption{$S$}
\label{sfig:func-ex3}
\end{subfigure}
\vfil
\begin{subfigure}[r]{\linewidth}
\centering
\begin{tikzpicture}[->=stealth,auto,scale=\dagsize,transform shape]
% x node set with absolute coordinates
%\node[state,font=\huge] (B) at (0,-2) {$B$};
\node[font=\huge] (X) at (-2,-4) {$X$};
\node[font=\huge] (C) at (0,-4) {$C$};
%\node[font=\huge] (D) at (2,-4) {$D$};
\node[font=\huge] (Y) at (2,-4) {$Y$};
\node[state,font=\huge] (U1) at (-1,-5.5) {$U_1$};
\node[state,font=\huge] (U2) at (1,-5.5) {$U_2$};

% Directed edge
%\path (A) edge (U);
%\path (A) edge (B);
\path (X) edge (C);
\path (C) edge (Y);
\path (U1) edge (X);
%\path (U1) edge (C);
\path (U2) edge (C);
\path (U2) edge (Y);
\end{tikzpicture}
\caption{$S'$}
\label{sfig:func-ex4}
\end{subfigure}
\end{minipage}

\caption{(Simplified) AC constructed for \(\Pr_x(y)\) wrt the causal graph in Figure~\ref{sfig:func-dep2}.}
\label{fig:c-components-set2}
\end{figure}

\subsection{Derivation For Figure~\ref{sfig:func-eg12}}
\label{app:derive-func-eg12}
We show that the AC in Figure~\ref{fig:known-eg1} constructed for \(\Pr_x(y)\) under \(\Pr^\star(\V)\) in Figure~\ref{sfig:func-eg12} can be simplified to an AC with a single constant. First, by evaluating the constants and merging the nodes, we obtain the AC in Figure~\ref{fig:known-eg1-simp}. If we further push down the \(+\)-node at the root using the distributivity rule and apply the sum-out rule, we obtain an AC with a single constant \(0.28.\)

\begin{figure}[tb]
\centering
\includegraphics[width=0.8\linewidth]{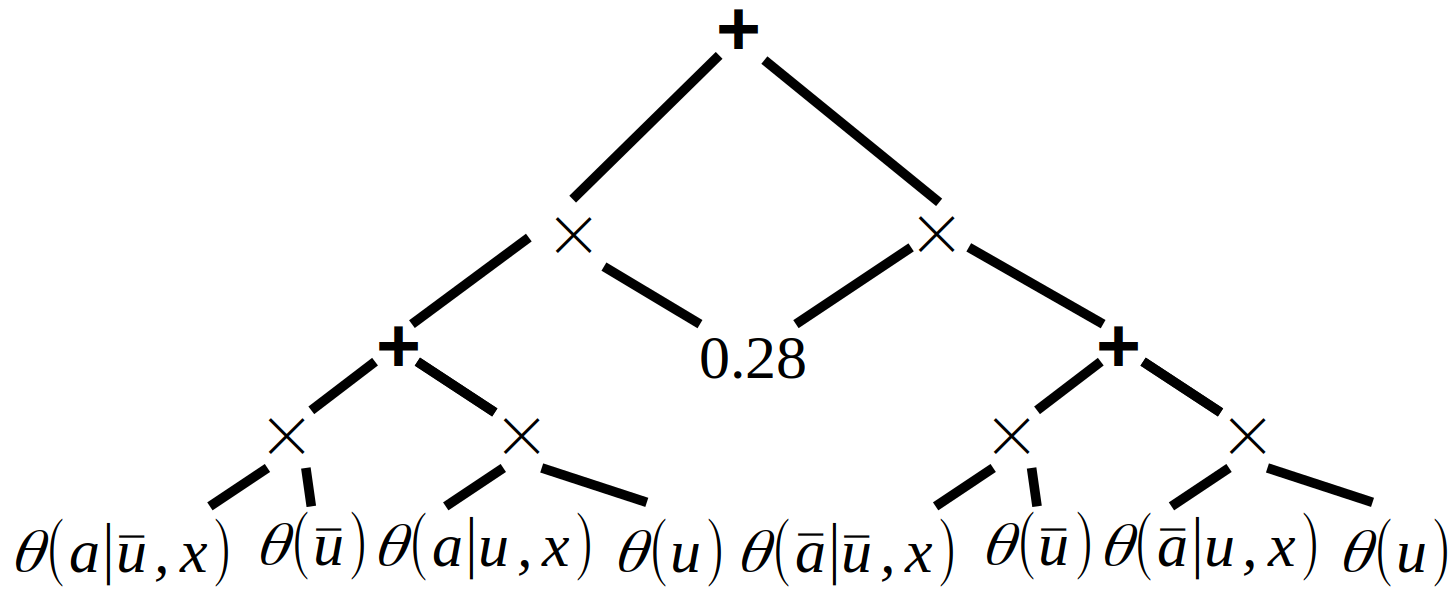}
\caption{Simplified AC from Figure~\ref{fig:known-eg1}.}
\label{fig:known-eg1-simp}
\end{figure}

\subsection{Derivation For Figure~\ref{sfig:func-eg13}}
\label{app:derive-known-pr}
Given the AC in Figure~\ref{sfig:known-eg12}, we still need to find the projection for the \(+\)-nodes with the following expression:
$\sum_{u_2} \theta_{u_2} \theta_{\bar{a} | u_2} \theta_{B | u_2, \bar{a}, x}$,
which does not match any c-factors for \(\V\)-computable subgraphs. However, by exploiting the CSI relation, we can set \(A=\bar{a}, X=x\) in the following c-factor 
$\sum_{U_1,U_2} \theta_{U_1}\theta_{U_2} \theta_{X | U_1}\theta_{A | U_2} \theta_{B | U_1, U_2, X, A}$ and get
\begin{equation*}
\begin{split}
&\sum_{u_1,u_2} \theta_{u_1}\theta_{u_2} \theta_{x | u_1}\theta_{\bar{a} | u_2} \theta_{B | u_1, u_2, x, \bar{a}}\\
& = \sum_{u_1,u_2} \theta_{u_1}\theta_{u_2} \theta_{x | u_1}\theta_{\bar{a} | u_2} \theta_{B | u_2, x, \bar{a}}  \equiv  \Pr(x, \bar{a}, B)\\
\end{split}
\end{equation*}
That is, once we fixed the values of \(X\) and \(A,\) the c-component \(S\) in Figure~\ref{fig:func-eg13-subgraph}. We can now further decompose \(S\) into two c-component, one of which is exactly the expression for the \(+\)-node in the AC:
\[\sum_{u_2} \theta_{u_2} \theta_{\bar{a} | u_2} \theta_{B | u_2, x, \bar{a}}  \equiv  \Pr(\bar{a}, B | x)\]
Hence, we conclude the projection fro the \(+\)-node is \(\Pr(\bar{a}, b | x).\)

\begin{figure}[tb]
\centering
\begin{tikzpicture}[->=stealth,auto,scale=\dagsize,transform shape]
% x node set with absolute coordinates
\node[state,font=\huge] (U1) at (0,0) {$U_1$};
\node[font=\huge] (X) at (-1.5,-1.5) {$X$};
\node[font=\huge] (B) at (1.5,-1.5) {$B$};
\node[state,font=\huge] (U2) at (3,0) {$U_2$};
\node[font=\huge] (A) at (4.5,-1.5) {$A$};

% Directed edge
%\path (A) edge (U);
\path (U1) edge (X);
\path (U2) edge (B);
\path (U2) edge (A);
\path (X) edge (B);
\path (A) edge (B);
\end{tikzpicture}
\caption{\(\V\)-computable subgraph for Figure~\ref{sfig:func-eg13} under \(X=x, A=a.\)}
\label{fig:func-eg13-subgraph}
\end{figure}
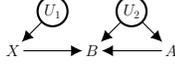

\section{Proofs} \label{app:proofs}
\subsection*{Proof of Proposition~\ref{prop:weak-weak-1}}
\begin{proof}
The causal effect is unidentifiable under some \(\Pr^\star(\V) > 0\) since the ID algorithm returns ``unidentifiable'' (i.e., the graph contains a \emph{hedge}\footnote{Hedge is a complete graphical structure that induces unidentifiability; see~\citep{aaai/ShpitserP06} for details.}). We next show that the causal effect \(\Pr_x(y)\) is identifiable whenever \(\Pr^\star(\V)\) satisfies the CSI constraints \((Y \indep C|a, B)\) and \((Y \indep B|\bar{a}, C).\) We show that the \(\Pr_x(y)\) can be identified as \[\Pr_x(y) = \Pr(a,y) + \sum_c \Pr(y|\bar{a},c)\Pr(c|x)\Pr(\bar{a})\]

We break \(\Pr_x(y)\) into two parts \(\Pr_x(a,y)\) and \(\Pr_x(\bar{a},y)\) by the law of total probability and identify each part separately. We first consider \(\Pr_x(a,y).\)
\begin{equation*}
\begin{split}
&\sum_{b}\sum_c \Pr(y | a,b,c) \sum_{u_1}\sum_{u_2}\Pr(u_1)\Pr(u_2)\\ 
&\Pr(a|u_1,u_2) \sum_{u_3} \Pr(u_3)\Pr(c|u_3,x)\Pr(b|u_2,u_3)\\
&=\sum_{b} \Pr(y | a,b) \sum_{u_1} \sum_{u_2}\Pr(u_1)\Pr(u_2)\Pr(a|u_1,u_2)\\ 
&\sum_{u_3} \Pr(u_3)\Pr(b|u_2,u_3)\;\;\;\;\;\; (Y \indep C | a, B)\\ 
&=\sum_{b} \Pr(y | a,b) \sum_{u_1} \sum_{u_2}\Pr(u_1)\Pr(u_2)\Pr(a|u_1,u_2)\Pr(b|u_2)\\
&\;\;\;\;\;\;\;\;\;\;\;\; (U_2 \indep U_3)\\ 
&= \sum_{b} \Pr(y | a,b) \sum_{u_2}\Pr(u_2)\Pr(a|u_2)\Pr(b|u_2)\\ 
&\;\;\;\;\;\;\;\;\;\;\;\; (U_1 \indep U_2)\\ 
&= \sum_{b} \Pr(y | a,b) \Pr(a,b) \;\;\;\;\;\;\; (A \indep B \ |\ U_2)\\ 
&= \Pr(a,y)\\ 
\end{split}
\end{equation*}
We next consider \(\Pr_x(\bar{a},y)\).
\begin{equation*}
\begin{split}
&\sum_{b}\sum_c \Pr(y | \bar{a},b,c) \sum_{u_1}\sum_{u_2}\Pr(u_1)\Pr(u_2)\Pr(\bar{a}|u_1,u_2)\\ 
&\ \ \ \sum_{u_3} \Pr(u_3)\Pr(c|u_3,x)\Pr(b|u_2,u_3)\\
&= \sum_{c} \Pr(y | \bar{a},c) \sum_{u_1} \sum_{u_2}\Pr(u_1)\Pr(u_2)\Pr(\bar{a}|u_1,u_2)\\ 
&\ \ \ \sum_{u_3} \Pr(u_3)\Pr(c|u_3,x) \;\;\;\;\;\;\; (Y \indep B\ |\ A=\bar{a}, C)\\
&=\sum_{c} \Pr(y | \bar{a},c) \sum_{u_1} \sum_{u_2}\Pr(u_1)\Pr(u_2)\Pr(\bar{a}|u_1,u_2) \Pr(c|x)\\
&\;\;\;\;\;\;\;\;\;\;\;\; (U_3 \indep X) \\
&= \sum_{c}\Pr(y | \bar{a},c)\Pr(\bar{a})\Pr(c|x)\;\;\;\;\;\;\;\;\; (U_1 \indep U_2)\\
\end{split}
\end{equation*}
Hence, \(\Pr_x(y)\) is identifiable when the CSI constraints hold.
\end{proof}

\subsection*{Proof of Proposition~\ref{prop:strong-uid1}}
WLG, to show that the causal effect \(\Pr_x(y)\) is unidentifiable under any \(\Pr^\star(X,Y) > 0\), we construct two parameterizations \(\Theta^1\) and \(\Theta^2\) that agree on any \(\Pr^\star(X,Y)\) but disagrees on \(\Pr_{x_1}({y_1})\), where \(\{x_1, \dots, x_n\}\) are the states of \(X\) and \(\{y_1, \dots, y_m\}\) are the states of \(Y\). Moreover, we assume that the hidden variables \(U\) is binary and has two states \(u_1, u_2\).
We start by constructing a parameterization for \(G\) that satisfy certain conditions. 
\begin{lemma}
\label{lem:string-uid1-1}
There exists a parameterization \(\Theta\) for \(G\) that induces \(\Pr^\star(X,Y)\) and in which (i) all CPTs are strictly positive; and (ii) \(\theta_{x | u_1} \neq \theta_{x | u_2}\) for all states \(x\) of \(X.\)
\end{lemma}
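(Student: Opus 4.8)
The plan is to prove the lemma by an explicit construction tailored to the bow graph $G$ of Figure~\ref{sfig:strong-uid1}, whose parameters are the prior $\theta_{u}$ over the binary hidden variable $U$, the CPT $\theta_{x \mid u}$ of $X$ given $U$, and the CPT $\theta_{y \mid x, u}$ of $Y$ given $X$ and $U$. The observational distribution induced by any such parameterization is $\Pr(x,y) = \sum_{u} \theta_{u}\, \theta_{x \mid u}\, \theta_{y \mid x, u}$, and I will exhibit a single $\Theta$ realizing the prescribed strictly positive $\Pr^\star(X,Y)$ while meeting both (i) and (ii).

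First I would fix the prior to any strictly positive value, say $\theta_{u_1} = \theta_{u_2} = 1/2$, and decouple the $Y$-part of the problem by setting $\theta_{y \mid x, u} = \Pr^\star(y \mid x)$ independently of $u$; this is well defined and strictly positive because $\Pr^\star(X,Y) > 0$ forces $\Pr^\star(x) > 0$ for every state $x$. With this choice the $Y$-factor sums out, so $\Pr(x,y) = \Pr^\star(y\mid x)\sum_u \theta_u \theta_{x\mid u}$, and the entire observational constraint collapses to the single family of equations $\sum_u \theta_u \theta_{x \mid u} = \Pr^\star(x)$. Thus it remains only to choose $\theta_{x\mid u_1}$ and $\theta_{x \mid u_2}$ so that their $\theta_u$-mixture reproduces $\Pr^\star(X)$, while being strictly positive, normalized, and satisfying the strict inequality (ii).

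Next I would solve this reduced problem by a symmetric perturbation around $\Pr^\star(X)$. Writing $\theta_{x\mid u_1} = \Pr^\star(x) + t_x/2$ and $\theta_{x \mid u_2} = \Pr^\star(x) - t_x/2$, the mixture constraint $\tfrac12\theta_{x\mid u_1} + \tfrac12\theta_{x\mid u_2} = \Pr^\star(x)$ holds automatically for every perturbation vector $(t_x)$, normalization of both columns reduces to the single condition $\sum_x t_x = 0$, the gap relevant to (ii) is exactly $\theta_{x\mid u_1} - \theta_{x\mid u_2} = t_x$, and strict positivity holds for all $x$ once $\max_x |t_x|$ is small relative to $\min_x \Pr^\star(x) > 0$. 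Hence it suffices to pick a vector $(t_x)$ with $\sum_x t_x = 0$, all entries nonzero, and all entries sufficiently small in magnitude.

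The only point needing care---the nearest thing to an obstacle---is producing such a $(t_x)$: every coordinate must be nonzero (for (ii)) while the coordinates sum to zero (for normalization). This is possible precisely because $X$ has at least two states; e.g.\ in the binary case take $t_{x_1} = \eta,\ t_{x_2} = -\eta$, and in general take any fixed zero-sum vector with no zero entry, scaled by a small $\eta>0$ keeping both columns positive. Substituting back yields a fully specified, strictly positive $\Theta$ inducing $\Pr^\star(X,Y)$ with $\theta_{x\mid u_1}\neq\theta_{x\mid u_2}$ for all $x$, proving the lemma. I expect the subsequent use of this $\Theta$ in the proof of Proposition~\ref{prop:strong-uid1} to rely on exactly property (ii), since a nonzero gap between the two columns of $\theta_{x\mid u}$ is what later leaves room to repartition the hidden mass so as to change $\Pr_x(y)$ without disturbing $\Pr^\star(X,Y)$.
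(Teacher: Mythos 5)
Your proposal is correct and follows essentially the same construction as the paper: a uniform prior on $U$, a symmetric $\pm$ perturbation of $\Pr^\star(X)$ for the two columns $\theta_{x\mid u_1}, \theta_{x\mid u_2}$, and a CPT for $Y$ that ignores $U$ so that $(Y \indep U \mid X)$ preserves $\Pr^\star(X,Y)$. In fact your use of a zero-sum perturbation vector $(t_x)$ with all entries nonzero is slightly more careful than the paper's uniform ``$+\epsilon$/$-\epsilon$ for all states $x$,'' which as written does not keep the columns normalized when $X$ has more than two states.
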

\begin{proof}
We first construct a parameterization \(\Theta\) for the subgraph \(X \rightarrow Y\) that induces exactly \(\Pr^\star(X,Y)\). In particular,
let \(\theta_{x} = \Pr^\star(x)\) and \(\theta_{y | x} = \Pr^\star(y | x)\) for all states \(x\) and \(y\). Note that all values in \(\Theta\) are in \((0,1)\) since \(\Pr^\star(X,Y) > 0\).

We next construct a parameterization \(\Theta'\) for the subgraph \(U \rightarrow X \rightarrow Y\) based on \(\Theta\). In particular, let \(\epsilon \rightarrow 0^+\) be an arbitrarily small constant, we assign
\begin{equation*}
\begin{split}
&\theta'_{u_1} = \theta'_{u_2} = 0.5\\
&\theta'_{x | u_1} = \theta_{x} + \epsilon\\
&\theta'_{x | u_2} = \theta_{x} - \epsilon\\
&\theta'_{y | x} = \theta_{y | x}\\
\end{split}
\end{equation*}
for all states \(x, y.\) Let \(\Pr'\) be the distribution induced by \(\Theta',\) we show that \(\Pr' = \Pr^\star\) as follows.
\begin{equation*}
\begin{split}
\Pr'(x,y) &= \theta'_{u_1}\theta'_{x|u_1}\theta'_{y|x} + \theta'_{u_2}\theta'_{x|u_2}\theta'_{y|x}\\    
&=0.5 (\theta_{x} + \epsilon)\theta_{y | x} + 0.5 (\theta_{x} - \epsilon)\theta_{y | x} \\
&=\theta_x \theta_{y | x} = \Pr^\star(x,y).\\
\end{split}
\end{equation*}
By construction, \(\Theta'\) satisfies both conditions (i) \& (ii).

We finally construct \(\Theta''\) based on \(\Theta'\) by assigning the same CPTs for \(U\) and \(X\) but different CPTs for \(Y.\) In particular,
\[\theta''_{y | u_1, x} = \theta''_{y | u_2, x} = \theta'_{y | x}\]
\(\Theta''\) also induces the distribution \(\Pr^\star\) since \((Y \indep U | X).\)
\end{proof}

\begin{proof}[Proof of Proposition~\ref{prop:strong-uid1}]
Let \(\Theta^1\) be parameterization that satisfies the conditions in Lemma~\ref{lem:string-uid1-1}, we next construct \(\Theta^2\) that also induces \(\Pr^\star(X,Y)\) but disagrees with \(\Theta^1\) on the causal effect \(\Pr_{x_1}(y_1)\). We do so by assigning same CPTs for \(U\) and \(X\), i.e., \(\Theta^2_U = \Theta^1_U\) and \(\Theta^2_X = \Theta^1_X\), but different CPTs for \(Y.\) 
Specifically, for each state \(y \neq y_1\), we assign
\begin{equation*}
\theta^2_{y | u_1, x} = 
\begin{cases}
\theta^1_{y | u_1, x} + \epsilon & \text{if } y = y_1\\
\theta^2_{y | u_1, x} = \theta^1_{y | u_1,x} - \frac{\epsilon}{m-1} & \text{if } y\neq y_1\\
\end{cases}     
\end{equation*}
and
\begin{equation*}
\theta^2_{y | u_2, x} = 
\begin{cases}
\theta^1_{y | u_2, x} - \frac{\Pr^1(u_1,x_1)}{\Pr^1(u_2, x_1)} & \text{if } y = y_1\\
\theta^1_{y | u_2,x} + \frac{\Pr^1(u_1,x_1)}{\Pr^1(u_2, x_1)} \frac{\epsilon}{(m-1)} & \text{if } y\neq y_1\\
\end{cases}     
\end{equation*}
where \(\Pr^1\) denotes the distribution induced by \(\Theta^1.\)

We first prove show that \(\Theta^2\) induces a distribution \(\Pr^2\) where \(\Pr^2(X,Y) = \Pr^1(X,Y) = \Pr^\star(X,Y).\) We first consider the instantiations that contains \(y_1,\)
\begin{equation*}
\begin{split}
&\Pr^2(x,y_1) = \theta^2_{u_1}\theta^2_{x|u_1}\theta^2_{y_1|u_1,x} + \theta^2_{u_2}\theta^2_{x|u_2}\theta^2_{y_1|u_2,x}\\    
&=\theta^1_{u_1}\theta^1_{x|u_1}(\theta^1_{y_1|u_1,x} + \epsilon) + \theta^1_{u_2}\theta^1_{x|u_2}(\theta^1_{y_1|u_2,x} - \frac{\Pr^1(u_1,x)}{\Pr^1(u_2, x)}\epsilon)\\
&=\Pr^1(x,y_1).\\
\end{split}
\end{equation*}
The last step is due to the fact that \(\theta^1_{u_1}\theta^1_{x|u_1} = \Pr(u_1,x).\)
We next consider the case when \(y \neq y_1.\)
\begin{equation*}
\begin{split}
&\Pr^2(x,y) = \theta^2_{u_1}\theta^2_{x|u_1}\theta^2_{y|u_1,x} + \theta^2_{u_2}\theta^2_{x|u_2}\theta^2_{y|u_2,x}\\    
&=\theta^1_{u_1}\theta^1_{x|u_1}(\theta^1_{y|u_1,x} - \frac{\epsilon}{m-1}) + \\
&\theta^1_{u_2}\theta^1_{x|u_2}(\theta^1_{y|u_2,x} + \frac{\Pr^1(u_1,x)}{\Pr^1(u_2, x)} \frac{\epsilon}{(m-1)})\\
&=\Pr^1(x,y).\\
\end{split}
\end{equation*}

Together with the previous result, we have \(\Pr^2(X,Y) = \Pr^\star(X,Y).\) We are left to show that \(\Theta^1\) and \(\Theta^2\) induce different causal effect, i.e., \(\Pr^1_{x_1}(y_1) \neq \Pr^2_{x_1}(y_1)\), which can be proved as follows. 
\begin{equation*}
\begin{split}
&\Pr^2_{x_1}(y_1) = \theta^2_{u_1}\theta^2_{y_1|u_1,x_1} + \theta^2_{u_2}\theta^2_{y_1|u_2,x_1}\\    
&=\theta^1_{u_1}(\theta^1_{y_1|u_1,x_1} + \epsilon) + \theta^1_{u_2}(\theta^1_{y_1|u_2,x_1} - \frac{\Pr^1(u_1,x_1)}{\Pr^1(u_2, x_1)}\epsilon)\\
&= \Pr^1_{x_1}(y_1) + \theta^1_{u_1}\epsilon - \theta^1_{u_2}\frac{\Pr^1(u_1,x_1)}{\Pr^1(u_2, x_1)}\epsilon\\
&=\Pr^1_{x_1}(y_1) + \frac{\Pr^1(u_1)(\Pr^1(x_1 | u_2)-\Pr^1(x_1 | u_1))}{\Pr^1(x_1|u_2)}\epsilon \\
&\neq \Pr^1_{x_1}(y_1).\\
\end{split}
\end{equation*}
since \(\Pr^1\) satisfies Conditions (i) \& (ii) in Lemma~\ref{lem:string-uid1-1}.
\end{proof}

\subsection*{Proof of Proposition~\ref{prop:did}}
The result follows from Definition~\ref{def:constrained-id} and Definition~\ref{def:inv-exp}.

\subsection*{Proof of Proposition~\ref{prop:id-cut2}}
\begin{proof}
By Proposition~\ref{prop:did}, it suffices to show that the output of the AC is \(\langle G, \V, \MM \rangle\)-invariant. If we replace all the expressions of nodes on the invariant-cut with their projections, it is guaranteed that the output expression of the AC (with projections) only involves variables \(\V,\) which is determined by \(\Pr(\V).\)  
\end{proof}

\subsection*{Proof of Theorem~\ref{thm:subsume-id}}
\begin{proof}
First note that the completeness proof of the ID algorithm~\citep{aaai/ShpitserP06,aaai/HuangV06} assumes all variables are binary. Hence, the if direction of the theorem must hold.

Before proving the only-if direction, we briefly review the complete causal-effect identifiability algorithm in~\citep{aaai/HuangV06}. The algorithm starts by omitting all variables that are not ancestors of \(\Y\) in the mutilated graph \(G_{\overline{\X}}\) and decomposing the mutilated graph into into c-components \(\CC = \{C_1, \dots, C_k\}.\)\footnote{\(G_{\overline{\X}}\) is obtained from the original causal graph \(G\) by removing the incoming edges of \(\X.\)} The goal is then to identify the c-factor for each of the c-component in \(\CC.\) This is implemented through a recursive procedure that identifies some \(C \in \CC\) from another graph \(T.\) In each recursive call, two operations may be performed: (i) Assign \(T = An(C)_{T} \cap T\) where \(An(C)_T\) denotes the variables in \(T\) that are ancestors of variables in \(C;\) and (ii) decompose \(T\) into c-components and identify \(C\) from one of the c-components. We say that \(C\) is identifiable from \(T\) iff \(T = C\) when the recursion terminates. The causal effect is identifiable iff all c-components in \(\CC\) can be identified from \(G.\)

One key observation is that both operations are captured by the cases in Definition~\ref{def:id-ccomp}. Operation~(i) is equivalent to recursively pruning leaf nodes that are not in \(C,\) and operation~(ii) is the same as case~(3) in Definition~\ref{def:id-ccomp}. Hence, a c-component \(C \in \CC\) is identifiable only if \(C\) is a \(\V\)-computable subgraph of \(G\), and we can always find a projection for \(C\) if it can be identified by the complete algorithm in~\citep{aaai/HuangV06}.

We now construct an AC based on the expression \(\sum_{\V \setminus (\X \cup \Y)} \prod_{C \in \CC} Q[C],\) where \(Q[C]\) denotes the c-factor for the c-component \(C\) and can be represented as an AC in a standard way. Note that the AC always computes the causal effect under all models. Moreover, if the causal effect is identifiable, we can always find projections for the AC nodes corresponding to these \(Q[C]\)'s, which form an invariant-cut for the AC.
\end{proof}

\end{document}